\theoremstyle{definition}
\newtheorem{dfn}{Definition}[section]
\newtheorem{lemma}{Lemma}[section]
\newcommand{\diag}{\mathsf{diag}}
\newcommand{\Gr}{\mathsf{Gr}}
\newcommand{\Span}{\mathsf{span}}
\newcommand{\sinc}{\mathsf{sinc}}
\newcommand{\Trace}{\mathsf{Trace}}
\newcommand{\Mag}{\mathsf{Mag}}
\title{Second-order difference subspace}
\author{%
  Kazuhiro Fukui, Pedro H.V. Valois\\  
  Department of Computer Science,  University of Tsukuba, Japan\\
  \And
  Lincon Souza, Takumi Kobayashi\\
  National Institute of Advanced Industrial Science and Technology (AIST), Japan \\
}
\begin{document}

\maketitle

\begin{abstract}
Subspace representation is a fundamental technique in various fields of machine learning. Analyzing a geometrical relationship among multiple subspaces is essential for understanding subspace series' temporal and/or spatial dynamics. This paper proposes the second-order difference subspace, a higher-order extension of the first-order difference subspace between two subspaces that can analyze the geometrical difference between them. As a preliminary for that, we extend the definition of the first-order difference subspace to the more general setting that two subspaces with different dimensions have an intersection. We then define the second-order difference subspace by combining the concept of first-order difference subspace and principal component subspace (Karcher mean) between two subspaces, motivated by the second-order central difference method. We can understand that the first/second-order difference subspaces correspond to the velocity and acceleration of subspace dynamics from the viewpoint of a geodesic on a Grassmann manifold. We demonstrate the validity and naturalness of our second-order difference subspace by showing numerical results on two applications: temporal shape analysis of a 3D object and time series analysis of a biometric signal.
\end{abstract}

\section{Introduction}
Subspace representation appears as a critical component in various types of research in machine learning\cite{watanabe,fukunaga2,oja,parsons2004subspace,katayama2005subspace,fg98,kim1,cmsm, SSAbook,Subspace_methods}.
Depending on the application, subspace representations offer different advantages: a shape subspace allows invariance to rotation for skeleton data in tasks such as hand gesture and action recognition\cite{shapesp2010,shapesp}; in signal processing, a signal subspace generated from spectral analysis captures oscillatory components with complex frequency structure while providing invariance to changes in amplitude\cite{SSAbook,gssa,tgssa,mssa,icassp2024,maha2024}; and in natural language processing, a word subspace conveys the leading concepts in a piece of text\cite{erica2018,erica2021}; and subspaces can arise in many more applications where the data of interest has a low-dimensional structure represented by a high-dimensional feature space.

Analyzing temporal and/or spatial dynamics of the series of such subspaces is a central technique in machine learning. To construct such an analysis, we introduced the first-order difference subspace to represent the difference between two subspaces as a natural generalization of a difference vector between two vectors \cite{tpami2015,tpami2023}. The first-order difference subspace effectively works as valid feature extraction based on the first-order differential operator in various tasks such as shape analysis \cite{cmsm,tpami2015,tpami2023} and anomaly detection from the time series \cite{abnormal_detection}.
In this paper, we extend the concept of the first-order difference subspace to the second one and then define it corresponding to the second-order differential operator. By incorporating the second difference subspace, we significantly enhance our subspace analysis framework to advance the research based on subspace representation.

As a preparation for the definition of second-order difference, we first review the mechanism of the first-order difference subspace (DS)\cite{tpami2015}, defined in the simple case that two subspaces with the same dimension have no intersection. We then extend the definition to a general case in which two subspaces with different dimensions intersect. Our definition of the second-order difference subspace is motivated by the second-order central difference method for solving various differential equations. It works as the second-order differential operator corresponding to the first-order difference subspace.

Given three sequential subspaces, ${\mathcal{S}}_1$, ${\mathcal{S}}_2$ and ${\mathcal{S}}_3$, we naturally define the second-order DS as the first-order DS between the subspace ${\mathcal{S}}_2$ and the principal component subspace (Karcher mean) ${\mathcal{M}}$ between ${\mathcal{S}}_1$ and ${\mathcal{S}}_3$. We can understand the physical meaning of this definition by considering a geodesic on a Grassmann manifold, where the first-order and second-order DSs correspond to the velocity and acceleration of a moving subspace, respectively.

We demonstrate the validity and naturalness of our second-order DS on two numerical experiments: temporal analysis of deforming 3D shape and time series analysis of biometric signals. In the first experiment, the 3D shape of a walking/jumping subject is represented by a series of three-dimensional subspaces. Each subspace models the shape at one time step, forming a series as the shape evolves through time \cite{shapesp2010,shapesp}. We show that the first/second-order DS can capture the walking action's expected velocity and acceleration. In the second one, a short duration signal is represented by a series of high dimensional subspaces known as signal subspace, each one representing an overlapping window of the signal\cite{SSAbook}.
The first/second-order DSs also show consistent behaviour in this scenario, although this setting is more complicated because there is a large intersection between sequential signal subspaces.

The contributions of this paper are summarized as follows:
\begin{itemize}
\item We extend the definition of the first-order difference subspace to a general case where two subspaces with different dimensions intersect.
\item We propose the second-order difference subspace as a natural extension of the first-order difference subspace.
\item We demonstrate our definition's validity and physical naturalness through numerical experiments.
\end{itemize}
The remainder of the paper is organized as follows. In Section \ref{sec:DS}, we review the definition of first-order DS in the simple case and extend it to a general case. In Section \ref{sec:2ndDS}, we define the second-order DS. In Section \ref{sec:geometry}, we discuss the geometry of second-order DS. In Section \ref{sec:numexps}, we demonstrate the validity of the first/second-order difference subspaces. 
We conclude in Section \ref{sec:conclusions}.

\section{First-order difference subspace}
\label{sec:DS}
First-order difference subspace (DS) ${\mathcal{D}}$ between two subspaces, $\mathcal{S}_1$ and $\mathcal{S}_2$, is a generalization of the difference vector $\mathbf{d}$ between two  vectors, $\mathbf{u}$ and $\mathbf{v}$, with unit length, as shown in Fig~\ref{fig:concept-ds}. The first-order DS ${\mathcal{D}}(\mathcal{S}_1,\mathcal{S}_2)$ between $\mathcal{S}_1$ and $\mathcal{S}_2$ can be defined in both geometrical and analytical viewpoints \cite{tpami2015}. 

The first-order DS is originally defined in a simple case in which there is no intersection subspace between two subspaces with the same dimension \cite{tpami2015}. In this paper, we remove this limitation and provide the definition in the more general case that there is a $r$-dimensional intersection subspace between two subspaces with different dimensions. 

\begin{figure}[tb]
\begin{minipage}[t]{0.48\columnwidth}
\centering
\includegraphics[scale=0.35]{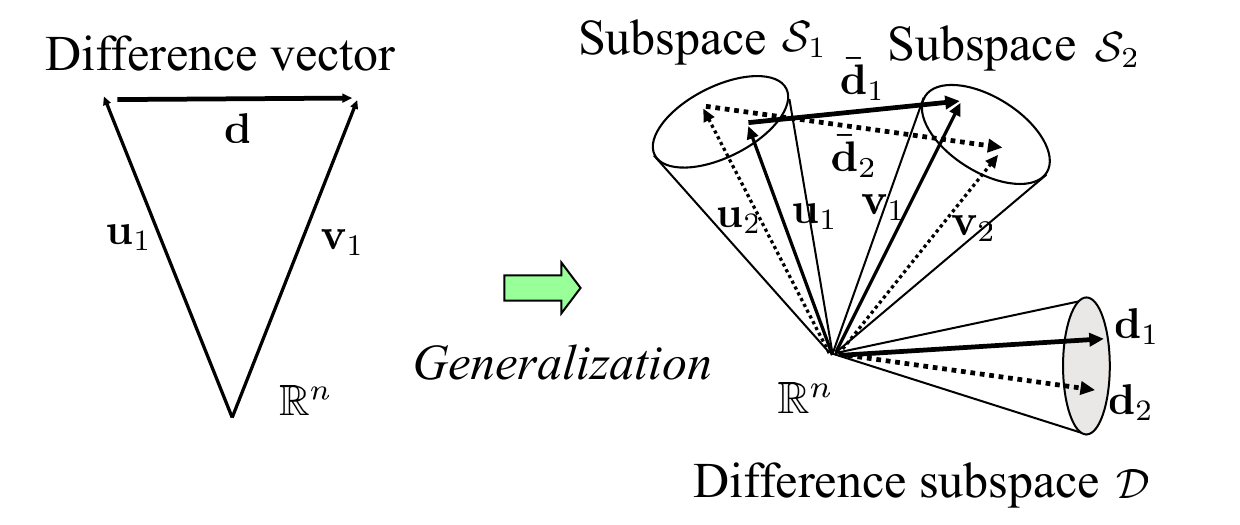}
(a) Geometrical definition
\end{minipage}
\hspace{8mm}
\begin{minipage}[t]{0.48\columnwidth}
\centering
\includegraphics[scale=0.26]{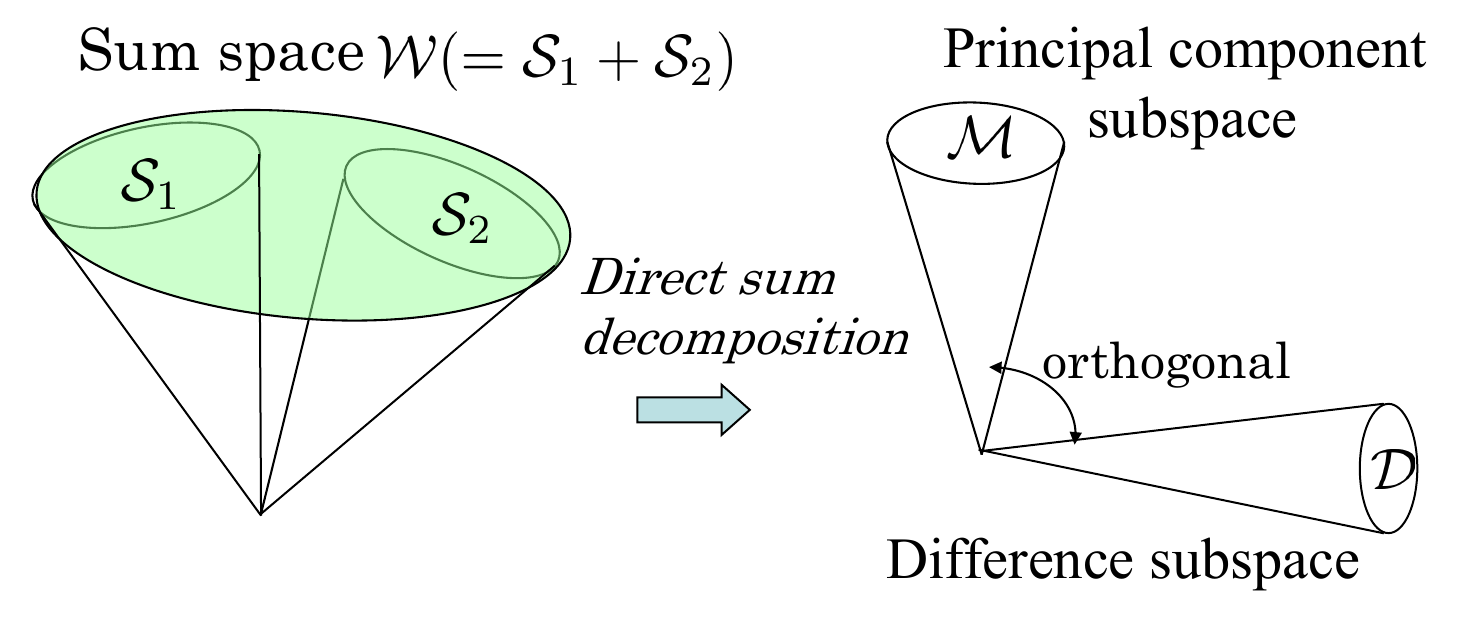}
(b) Analytical definition
\end{minipage}
\caption{First-order difference subspace $\mathcal{D}$ between subspaces $\mathcal{S}_1$ and $\mathcal{S}_2$ \cite{tpami2015}.} 
\label{fig:concept-ds}
\label{fig:DS}
\end{figure}

\subsection{Definitions in the simple case}
\noindent{\bf{Geometrical definition}:}
Given $d_1$-dimensional subspaces $\mathcal{S}_1$ and $\mathcal{S}_2$ in $n$-dimensional vector space, assuming that the two subspaces have no intersection, we can calculate $d_1$ canonical angles $\lbrace \theta_i \rbrace{^{d_1}_{i=1}}$  between them~\cite{cangle1,cangle2}, in an ascending order, $\theta_i\leq\theta_j\ \forall i<j$.
Let ${\bar{\mathbf{d}}}_i={\mathbf{u}}_i-{\mathbf{v}}_i \in {\mathbb{R}}^{n}$ be the difference vector between the canonical vectors ${\mathbf{u}}_i\in\mathcal{S}_1$ and ${\mathbf{v}}_i\in\mathcal{S}_2$, which form the $i$th canonical angle, $\theta_i$, as shown in Fig.\ref{fig:DS}(a). We can regard the normalized difference vectors 
$\{{\mathbf{d}}_i = \frac{{\bar{\mathbf{d}}}_i}{\|{\bar{\mathbf{d}}}_i\|}\}_{i=1}^{d_1}$ as the orthonormal basis vectors of the difference subspace $\mathcal{D}$, as they are orthogonal to each other \cite{tpami2015}.  

According to the geometrical definition, we can calculate the orthonormal basis of ${\mathcal{D}}$ with low computational cost \cite{Kobayashi_2023_BMVC}.
Let $\mathbf{\Phi} \in {\mathbb{R}}^{{n}\times{d_1}}$ and $\mathbf{\Psi} \in {\mathbb{R}}^{{n}\times{d_1}}$ be orthonormal basis of ${\mathcal{S}}_1$ and ${\mathcal{S}}_2$. The canonical angles ${\{\theta_i\}}_{i=1}^{d_1}$ can be calculated by performing the SVD as follows: 
\begin{equation}
 {\mathbf{\Phi}}^{\top} {\mathbf{\Psi}} = {\mathbf{U}} {\mathbf{\Sigma}} {\mathbf{V}}^{\top},
  \label{eq:svdMSM}
\end{equation}
where $\mathbf{U} \in {\mathbb{R}}^{{d_1}\times{d_1}}$, ${\mathbf{\Sigma}}$ is a diagonal matrix $\in {\mathbb{R}}^{{d_1}\times{d_1}}$, $\mathbf{V} \in {\mathbb{R}}^{{d_1}\times{d_1}}$ and ${\diag(\mathbf\Sigma)}=\{\cos\theta_1, \cdots, \cos\theta_{d_1}\}$.

{\color{black}
The orthonormal basis ${\mathbf{D}}$ of difference subspace ${\mathcal{D}}$ is represented in matrix form as follows \cite{Kobayashi_2023_BMVC}:
\begin{align}
{\mathbf{D}} = ({\mathbf{{\Phi}U-{\Psi}V}})\{2({\mathbf{I}}-{\mathbf{\Sigma)}}\}^{-\frac{1}{2}},
 \label{eq:svd1}
\end{align}
where ${\mathbf{{\Phi}U-{\Psi}V}}=[\bar{\mathbf{d}}_1,\bar{\mathbf{d}}_2,\cdots, \bar{\mathbf{d}}_{d_1}]$ $\in {\mathbb{R}}^{n{\times}{d_1}}$
and $\{2({\mathbf{I}}-{\mathbf{\Sigma)}}\}^{-\frac{1}{2}}$=${\diag({\|\bar{\mathbf{d}}_1\|}_2, \dots, {\|\bar{\mathbf{d}}_{d_1}\|}_2)}^{-1}$ $\in {\mathbb{R}}^{{d_1}{\times}{d_1}}$.

As a pair to ${\mathcal{D}}$, we defined the principal component subspace ${\mathcal{M}}$ that is spanned by $\{{\mathbf{m}}_i = \frac{{\bar{\mathbf{m}}}_i}{\|{\bar{\mathbf{m}}}_i\|}\}_{i=1}^{d_1}$, where ${\bar{\mathbf{m}}}_i={\mathbf{u}}_i+{\mathbf{v}}_i \in {\mathbb{R}}^{n}$ \cite{tpami2015}. The principal component subspace is equal to the Karcher mean between two subspaces ${\mathcal{S}}_1$ and ${\mathcal{S}}_2$ \cite{Kobayashi_2023_BMVC}.
The orthonormal basis ${\mathbf{M}}$ of the principal component subspace ${\mathcal{M}}$ between two subspaces ${\mathcal{S}}_1$ and ${\mathcal{S}}_2$ is represented in matrix form as follows \cite{Kobayashi_2023_BMVC}:
\begin{align}
{\mathbf{M}} = ({\mathbf{{\Phi}U+{\Psi}V}})\{{2({\mathbf{I}} + {\mathbf{\Sigma)}}}\}^{-\frac{1}{2}},
 \label{eq:svd2}
\end{align}
where ${\mathbf{{\Phi}U+{\Psi}V}}=[\bar{\mathbf{m}}_1, \cdots, \bar{\mathbf{m}}_{d_1}]$ $\in {\mathbb{R}}^{n{\times}{d_1}}$ and  
$\{2({\mathbf{I}}+{\mathbf{\Sigma)}}\}^{-\frac{1}{2}}={\diag({\|\bar{\mathbf{m}}_1\|}_2,\dots,{\|\bar{\mathbf{m}}_{d_1}\|}_2)}^{-1}$
$\in$ ${\mathbb{R}}^{{d_1}{\times}{d_1}}$. 

For the above derivation of ${\mathbf{D}}$ and ${\mathbf{M}}$, see \ref{lemma:orthonormal basis of D and M} in the appendix.
}

\vspace{3mm}
\noindent{\bf{Analytical definition}:}
The difference subspace ${\mathcal{D}}$ can also be analytically defined by using the orthogonal projection matrices of the two subspaces  $\mathcal{S}_1$ and $\mathcal{S}_2$ \cite{cangle2}.

\if 0
The orthogonal projection matrices, $\mathbf{P}_1$ and $\mathbf{P_2}$ $\in {\mathbb{R}}^{n{\times}n}$, are defined as follows:
\begin{align}
{\mathbf{P}_1}&={\mathbf{\Phi}}{\mathbf{\Phi}}^{T},\\
{\mathbf{P}_2}&={\mathbf{\Psi}}{\mathbf{\Psi}}^{T}.
\end{align}
\fi

The basis vectors of the difference subspace $\mathcal{D}$ are calculated as the eigenvectors of the sum matrix $\mathbf{G}=\mathbf{P}_1+\mathbf{P}_2$ of the projection matrices as follows:
\begin{equation}
{\mathbf{GD}}={\mathbf{D\Lambda}},
 \label{eq:ds}
\end{equation}
where ${\mathbf{P}_1}={\mathbf{\Phi}}{\mathbf{\Phi}}^{\top}$ and ${\mathbf{P}_2}={\mathbf{\Psi}}{\mathbf{\Psi}}^{\top}$. $\mathbf{D}$ is the matrix arranging eigenvectors $\{{\mathbf{d}}_i\}$ in columns and $\diag(\mathbf{\Lambda})=[\lambda_1, \dots, \lambda_{d_1}]$.

\if 0
The basis vectors of the difference subspace $\mathcal{D}$ are calculated as the eigenvectors of the sum matrix $\mathbf{G}=\mathbf{P}_1+\mathbf{P}_2$ of the projection matrices as follows:
\begin{equation}
 {\mathbf{GD}}={\mathbf{D\Sigma}},
 \label{eq:ds}
\end{equation}
where $\mathbf{D}$ is the matrix arranging eigenvectors $\{{\mathbf{d}}_i\}$ in columns and $\mathbf{\Sigma}$ is the diagonal matrix containing eigenvalues, $\lambda_1, \dots, \lambda_{d_1}$, in the diagonal elements. 
\fi

\vspace{3mm}
\begin{dfn}[Difference subspace $\mathcal{D}$] $d_1$ eigenvectors $\{{\mathbf{d}}_i^{\prime}\}_{i=1}^{d_1}$ of $\mathbf{G}$ corresponding to the positive eigenvalues smaller than one span the difference subspace ${\mathcal{D}}({\mathcal{S}_1}, {\mathcal{S}_2})$ between ${\mathcal{S}_1}$ and ${\mathcal{S}_2}$. 
\end{dfn}

\begin{dfn}[Principal component subspace (Karcher mean) $\mathcal{M}$] $d_1$ eigenvectors $\{{\mathbf{m}}_i^{\prime}\}_{i=1}^{d_1}$ of $\mathbf{G}$ corresponding to eigenvalues larger than one span the principal component subspace $\mathcal{M}({\mathcal{S}_1}, {\mathcal{S}_2})$ between ${\mathcal{S}_1}$ and ${\mathcal{S}_2}$. 
\end{dfn}

The above definitions correspond that the sum subspace $\mathcal{W}$ of ${\mathcal{S}_1}$ and ${\mathcal{S}_2}$ is orthogonally decomposed into $\mathcal{D}$ and $\mathcal{M}$  as shown in Fig.\ref{fig:concept-ds}(b). This geometrical relationship means that the difference subspace $\mathcal{D}$ is generated by subtracting $\mathcal{M}$ that represents the mean of the two subspaces from the sum subspace $\mathcal{W}$; in other words, $\mathcal{D}$ can contain only the remaining difference component as its name suggests.

Note that the eigenvectors $\{{\mathbf{d}}_i^{\prime}\}$ and $\{{\mathbf{m}}_i^{\prime}\}$ are equal to $\{{\mathbf{d}}_i\}$ and $\{{\mathbf{m}}_i\}$ defined in the geometrical definition, respectively. For the details, see \cite{tpami2015}.

\subsection{Definitions in general case}
\label{sec:general case}
We consider more general case that $d_1$-dimensional subspace ${\mathcal{S}_1}$ and $d_2$-dimensional subspace ${\mathcal{S}_2}$ have a $r$-dimensional intersection subspace in $n$-dimensional vector space, for convenience sake, $d_1 \leq d_2$.

\begin{lemma}
\label{lemma:new orthonormal basis}
The orthonormal basis ${\mathbf{\Phi}} \in {\mathbb{R}}^{{n}\times{d_1}}$ of ${\mathcal{S}_1}$ and $\mathbf{\Psi} \in {\mathbb{R}}^{{n}\times{d_2}}$ of ${\mathcal{S}_2}$ ($d_1 \leq d_2$) can be orthogonally transformed to the following orthonormal bases ${\mathbf{\Phi}}^{*} \in {\mathbb{R}}^{{n}\times{d_1}}$ and ${\mathbf{\Psi}}^{*} \in {\mathbb{R}}^{{n}\times{d_2}}$: 
\begin{align}
\mathbf{\Phi}^{*} &=[ {\boldsymbol{\gamma}}_1, \dots, {\boldsymbol{\gamma}}_r, {\mathbf{u}}_{r+1},  \dots, {\mathbf{u}}_{d_1} ], \\
\mathbf{\Psi}^{*} &= [ {\boldsymbol{\gamma}}_1, \dots, {\boldsymbol{\gamma}}_r,  {\mathbf{v}}_{r+1},  \dots, {\mathbf{v}}_{d_1}, {\boldsymbol{\beta}}_{d_1+1}, \dots, {\boldsymbol{\beta}}_{d_2} ],\label{eq:beta}
\end{align} 
where $\{{\boldsymbol{\gamma}}_i\}$ represents the orthonormal basis of an  intersection subspace $\mathcal{I}$ between ${\mathcal{S}_1}$ and ${\mathcal{S}_2}$, $\{{\mathbf{u}}_i\}$ and $\{{\mathbf{v}}_i\}$ are a pair of the canonical vectors forming the $i$th nonzero canonical angle $\theta_i$ between 
${\mathcal{S}_1}$ and ${\mathcal{S}_2}$.
$\{{\boldsymbol{\beta}}_{d_1+1}, \dots, {\boldsymbol{\beta}}_{d_2}\}$ are orthogonal to the remaining orthonormal basis,  $\{{\boldsymbol{\gamma}}_i\}$, $\{{\mathbf{u}}_i\}$ and $\{{\mathbf{v}}_i\}$.
\end{lemma}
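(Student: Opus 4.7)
The plan is to build the two target bases by first extracting the common subspace $\mathcal{I}$ and then applying a standard SVD/canonical-angle argument on the orthogonal complements of $\mathcal{I}$ inside $\mathcal{S}_1$ and $\mathcal{S}_2$. Concretely, first I would pick any orthonormal basis $\{\boldsymbol{\gamma}_1,\dots,\boldsymbol{\gamma}_r\}$ of the intersection $\mathcal{I}=\mathcal{S}_1\cap\mathcal{S}_2$, and write
$\mathcal{S}_1 = \mathcal{I}\oplus\mathcal{S}_1'$ and $\mathcal{S}_2 = \mathcal{I}\oplus\mathcal{S}_2'$, where $\mathcal{S}_i'$ is the orthogonal complement of $\mathcal{I}$ inside $\mathcal{S}_i$, so $\dim\mathcal{S}_1' = d_1-r$ and $\dim\mathcal{S}_2' = d_2-r$. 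Let $\mathbf{\Phi}'\in\mathbb{R}^{n\times(d_1-r)}$ and $\mathbf{\Psi}'\in\mathbb{R}^{n\times(d_2-r)}$ be orthonormal bases of these complements.

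The key observation is that $\mathcal{S}_1'\cap\mathcal{S}_2'=\{0\}$: any vector in this intersection lies in $\mathcal{S}_1\cap\mathcal{S}_2=\mathcal{I}$ but is also, by construction, orthogonal to $\mathcal{I}$, hence zero. Consequently, the $(d_1-r)$ canonical angles between $\mathcal{S}_1'$ and $\mathcal{S}_2'$ are all strictly positive. Applying the SVD as in \eqref{eq:svdMSM},
\begin{equation*}
\mathbf{\Phi}'^{\top}\mathbf{\Psi}' = \mathbf{U}'\mathbf{\Sigma}'\mathbf{V}'^{\top},
\end{equation*}
with $\mathbf{U}'\in\mathbb{R}^{(d_1-r)\times(d_1-r)}$, $\mathbf{V}'\in\mathbb{R}^{(d_2-r)\times(d_2-r)}$ orthogonal and $\mathbf{\Sigma}'\in\mathbb{R}^{(d_1-r)\times(d_2-r)}$ diagonal. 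Setting $\mathbf{u}_i := (\mathbf{\Phi}'\mathbf{U}')_{:,i-r}$ and $\mathbf{v}_i := (\mathbf{\Psi}'\mathbf{V}')_{:,i-r}$ for $i=r+1,\dots,d_1$, and $\boldsymbol{\beta}_j := (\mathbf{\Psi}'\mathbf{V}')_{:,j-r}$ for $j=d_1+1,\dots,d_2$, gives orthonormal families realizing the canonical pairs.

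To finish, I would verify each claimed orthogonality in turn. Orthogonality of $\{\boldsymbol{\gamma}_i\}$ to $\{\mathbf{u}_i\},\{\mathbf{v}_i\},\{\boldsymbol{\beta}_j\}$ is immediate because the latter vectors all live in $\mathcal{S}_1'\cup\mathcal{S}_2'$, which is orthogonal to $\mathcal{I}$. Orthogonality within $\{\mathbf{u}_i\}$ and within $\{\mathbf{v}_j,\boldsymbol{\beta}_k\}$ follows from the orthogonality of $\mathbf{U}'$ and $\mathbf{V}'$ applied to the orthonormal matrices $\mathbf{\Phi}',\mathbf{\Psi}'$. Finally, the vectors $\{\boldsymbol{\beta}_j\}$ correspond to the zero columns of $\mathbf{\Sigma}'$, so $\mathbf{u}_i^{\top}\boldsymbol{\beta}_j = 0$; and since $\{\boldsymbol{\beta}_j\}\subset\mathcal{S}_2'$ while $\{\mathbf{u}_i\}\subset\mathcal{S}_1'$, this completes the orthogonality to all canonical vectors of $\mathcal{S}_1$. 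The transformations $\mathbf{\Phi}\to\mathbf{\Phi}^{*}$ and $\mathbf{\Psi}\to\mathbf{\Psi}^{*}$ are orthogonal because each maps one orthonormal basis of the same subspace to another.

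The only genuinely subtle step is isolating the intersection cleanly so that the SVD on the complements really does produce $d_1 - r$ strictly positive canonical values and $d_2 - d_1$ degenerate vectors $\boldsymbol{\beta}_j$ orthogonal to all of $\mathcal{S}_1'$. Once $\mathcal{S}_1'\cap\mathcal{S}_2'=\{0\}$ is established, everything else reduces to reading off the blocks of the SVD, so the main obstacle is essentially conceptual rather than computational.
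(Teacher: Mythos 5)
Your proposal is correct and follows essentially the same route as the paper's proof: peel off an orthonormal basis of the intersection $\mathcal{I}$, take orthonormal bases of the orthogonal complements of $\mathcal{I}$ inside $\mathcal{S}_1$ and $\mathcal{S}_2$ (the paper writes the second complement as a block $[\mathbf{A}_2\ \mathbf{A}_3]$, which is exactly your $\mathbf{\Psi}'$), apply the SVD of $\mathbf{\Phi}'^{\top}\mathbf{\Psi}'$ to produce the canonical pairs and the degenerate columns $\boldsymbol{\beta}_j$ from $\mathbf{V}_\perp$, and check the orthogonality relations from the block structure of $\mathbf{\Sigma}'$. The only cosmetic difference is that you treat the complement of $\mathcal{I}$ in $\mathcal{S}_2$ as a single $(d_2-r)$-dimensional block rather than pre-splitting it, which is if anything slightly cleaner.
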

For the proof, see Lemma \ref{lemma:new orthonormal basis} in the appendix.

\noindent{\bf{Geometrical definition}:}
{\color{black}
From the above Lemma, only $d_1-r$ nonzero canonical angles can be defined between them \cite{cangle1, cangle2}. The $r$ remaining canonical angles corresponding to the intersection subspace are zero since the two canonical vectors are the same as ${\{{\boldsymbol{\gamma}}_{i}\}}_{i=1}^r$. Accordingly, $\mathcal{D}$ is defined as a subspace spanned by $[{\mathbf{d}}_1, {\mathbf{d}}_2, \dots, {\mathbf{d}}_{d_1-r} ]$, where ${\mathbf{d}}_i=\frac{{\mathbf{v}}_{i+r}-{\mathbf{u}}_{i+r}}{||{\mathbf{v}}_{i+r}-{\mathbf{u}}_{i+r}||}$. 
$\mathcal{M}$ is defined as a subspace spanned by $[{\mathbf{m}}_1, {\mathbf{m}}_2, \dots, {\mathbf{m}}_{d_1-r} ]$, where ${\mathbf{m}}_i=\frac{{\mathbf{v}}_{i+r}+{\mathbf{u}}_{i+r}}{||{\mathbf{v}}_{i+r}+{\mathbf{u}}_{i+r}||}$.
For Eqs.(\ref{eq:svd1}) and (\ref{eq:svd2}), $\mathbf{U} \in {\mathbb{R}}^{{d_1}\times{(d_1-r)}}$, ${\mathbf{\Sigma}}$ is a diagonal matrix $\in {\mathbb{R}}^{{(d_1-r)}\times{(d_1-r)}}$, $\mathbf{V} \in {\mathbb{R}}^{{(d_1)}\times{(d_1-r)}}$ and ${\diag(\mathbf\Sigma)}=\{\cos\theta_1, \dots, \cos\theta_{d_1-r}\}$.
}

\vspace{2mm}
\noindent{\bf{Analytical definition}:}
Given the sum matrix $\mathbf{G}=\mathbf{P}_1+\mathbf{P}_2$, where $\mathbf{P}_1$=${\mathbf{\Phi}}{{\mathbf{\Phi}}}^{\top}$=${{\mathbf{\Phi}}^{*}}{{\mathbf{\Phi}}^{*}}^{\top}$ and $\mathbf{P}_2$=${\mathbf{\Psi}}{{\mathbf{\Psi}}}^{\top}$=${{\mathbf{\Psi}}^{*}}{{\mathbf{\Psi}}^{*}}^{\top}$ are the orthogonal projection matrix onto each subspace, respectively. 
Lemma \ref{lemma:new orthonormal basis} leads us the definitions of difference subspace ${\mathcal{D}}$ and principal component subspace ${\mathcal{M}}$ in the general case.
For the proof, see Lemma \ref{lemma:eigenvalues of G} in the appendix 

\vspace{2mm}
\begin{dfn}[Difference subspace $\mathcal{D}$] $d_1-r$ eigenvectors of $\mathbf{P}_1+\mathbf{P}_2$ corresponding to nonzero eigenvalues smaller than one span the difference subspace ${\mathcal{D}}({\mathcal{S}_1}, {\mathcal{S}_2})$ between ${\mathcal{S}_1}$ and ${\mathcal{S}_2}$.  
\end{dfn}
\vspace{1mm}
\begin{dfn}[Principal component subspace $\mathcal{M}$] $d_1$ eigenvectors of $\mathbf{P}_1+\mathbf{P}_2$ corresponding to eigenvalues larger than one span the principal component subspace $\mathcal{M}({\mathcal{S}_1}, {\mathcal{S}_2})$ between ${\mathcal{S}_1}$ and ${\mathcal{S}_2}$. 
\end{dfn}
\vspace{1mm}
\begin{dfn}[Intersection subspace $\mathcal{I}$] $r$ eigenvectors of $\mathbf{P}_1+\mathbf{P}_2$, corresponding to the eigenvalues equal to 2, span the intersection subspace $\mathcal{I}({\mathcal{S}_1}, {\mathcal{S}_2})$ between ${\mathcal{S}_1}$ and ${\mathcal{S}_2}$.  
\end{dfn}

A subspace $\mathcal{Z}$ spanned by ${d_2}-{d_1}$ eigenvectors, $\{{\boldsymbol\beta}_i\}_{i=d_1}^{d_2}$ in Eq.(\ref{eq:beta}), of $\mathbf{P}_1+\mathbf{P}_2$ corresponding to the eigenvalues equal to one are not included in either of ${\mathcal{M}}$ and ${\mathcal{D}}$. The sum subspace $\mathcal{W}$ is orthogonally decomposed into three subspaces, principal component subspace $\mathcal{M}$, difference subspace $\mathcal{D}$ and the subspace $\mathcal{Z}$ as shown in Fig. \ref{fig:orthogonal decomp}. The intersection subspace $\mathcal{I}$ is included in the principal component subspace $\mathcal{M}$. 

The eigenvectors corresponding to eigenvalues near to one and zero are unstable in the above definitions. Thus, in practice, we use only the eigenvectors corresponding to the eigenvalues smaller than 1.0-$\delta$ and larger than a given small positive value $\delta$, where $\delta$ is set in for example, 1e-3 $\sim$ 1e-6. 

\begin{figure}[tb]
\centering
 \begin{minipage}[b]{0.45\columnwidth}
\includegraphics[scale=0.29]{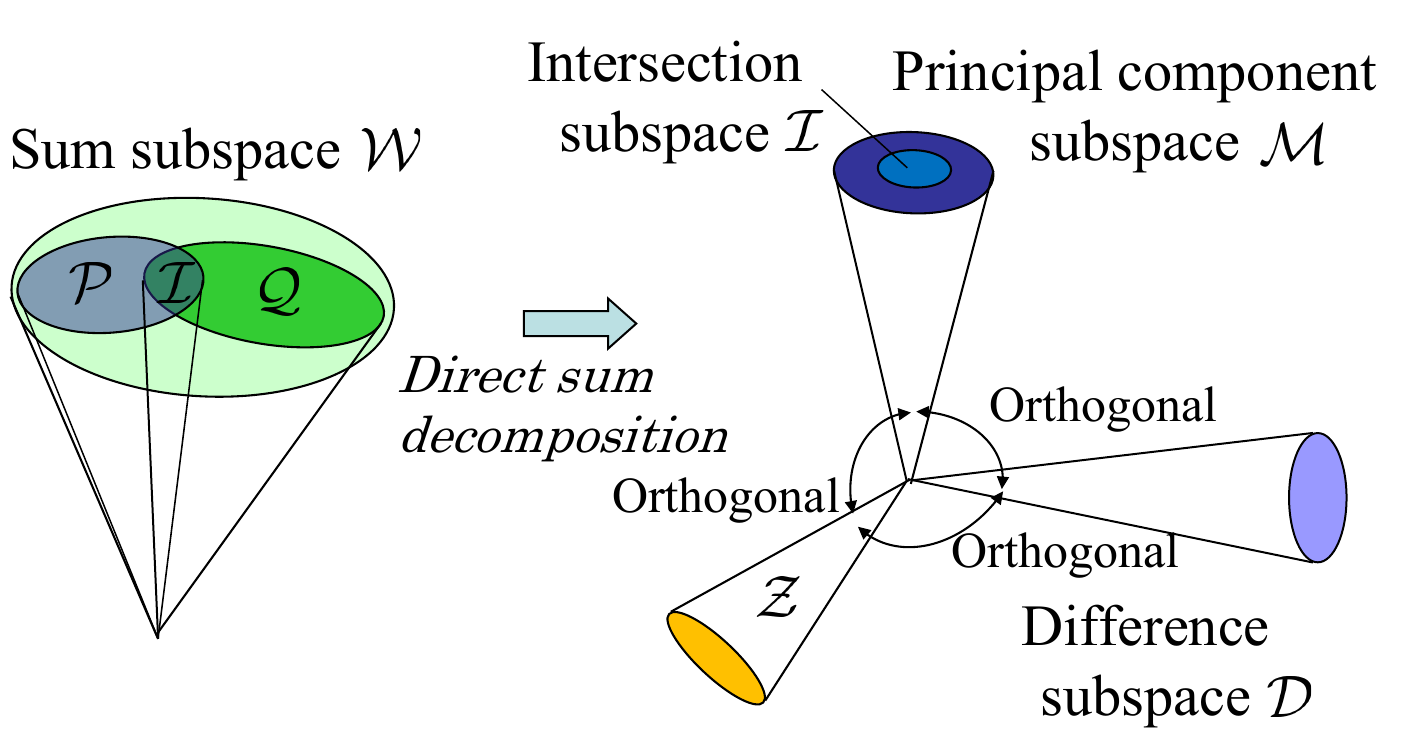}
\caption{Orthogonal decomposition of sum subspace $\mathcal{W}$ into three subspaces.} 
\label{fig:orthogonal decomp}
\end{minipage}
\hspace{5mm}
\begin{minipage}[b]{0.45\columnwidth}
  \centering
  \includegraphics[scale=0.29]{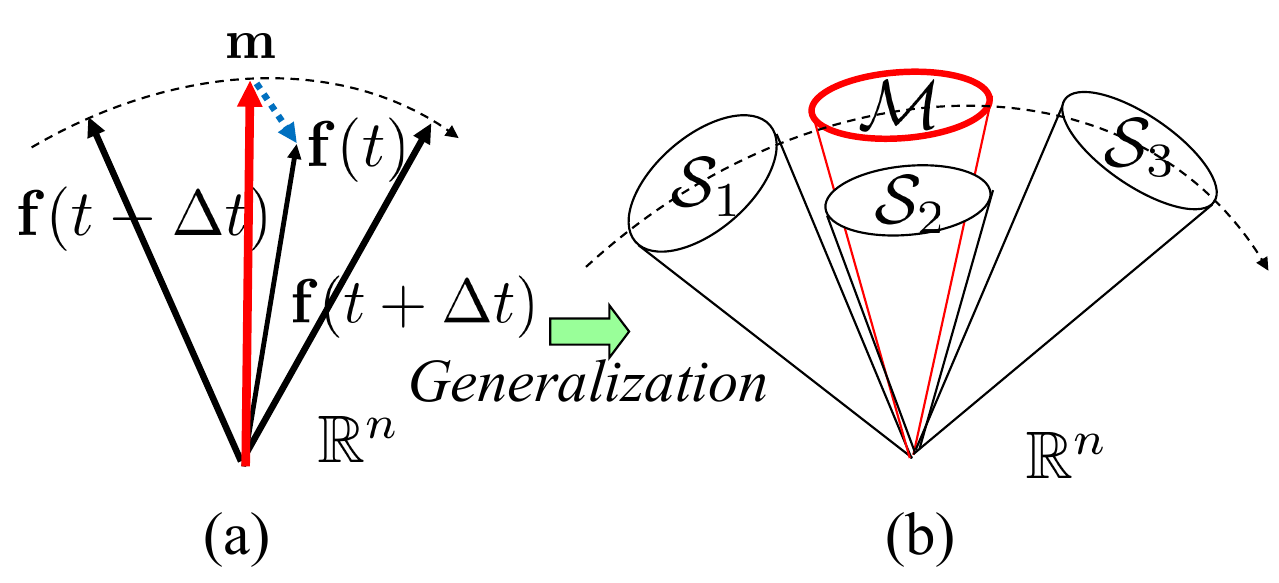}
  \caption{Three sequential vectors/subspaces in $n$-dimensional vector space.} 
  \label{fig:three subspaces}
 \end{minipage}
\end{figure}

\subsection{Magnitude of difference subspace}
The difference subspace (DS) has information solely about {\it direction}. Thus, we introduce the magnitude of DS to measure the distance between two subspaces like length of different vector ${||{\mathbf{u}}-{\mathbf{v}}||}_2^2$.
\vspace{1mm}
\begin{dfn}[Magnitude of difference subspace] The magnitude of DS $\mathcal{D}$ between two subspaces, $\mathcal{S}_1$ and $\mathcal{S}_2$, is formed by
\begin{displaymath}
    \Mag({\mathcal{D}}(\mathcal{S}_1,\mathcal{S}_2)) \stackrel{\mathrm{def}}{=} \Trace(2({\mathbf{I}}-{\mathbf{\Sigma)}})=\sum_{i=1}^{d_1} {\|{\mathbf{\bar{d}}}_{i}\|}_2^2,
\end{displaymath}
\label{eq:mag1}    
where $\mathbf{\Sigma}$ is obtained in Eq.~(\ref{eq:svdMSM}) and ${{\mathbf{\bar{d}}}_i}$ is the difference vector, ${\mathbf{u}}_i-{\mathbf{v}}_i$ as shown in Fig.\ref{fig:concept-ds}(a). This attribute reflects the hypervolume of a "gap" between the two subspaces.
\end{dfn} 

\section{Second-order difference subspace}
\label{sec:2ndDS}
We extend the concept of the first-order DS ${\mathcal{D}}$ to the second-order DS ${\mathcal{D}}^{2}$.
\subsection{Definition based on second-order central difference}
Our extension is motivated by the second-order central difference method for solving various differential equations. 
Given three sequential vectors $\mathbf{f}(t-\Delta{t})$, $\mathbf{f}(t)$ and $\mathbf{f}(t+\Delta{t})$ in $n$-dimensional vector space as shown in Fig.\ref{fig:three subspaces}(a), we can write the second-order central difference as follows:
\begin{align}
 \frac{d^2{\mathbf{f}}}{{dt}^2} &\approx
 \bigg( \frac{{\mathbf{f}}(t+\Delta{t})-{\mathbf{f}}(t)}{\Delta{t}}-\frac{{\mathbf{f}}(t)-{\mathbf{f}}(t-\Delta{t})}{\Delta{t}} \bigg)\Big /{\Delta{t}} \nonumber\\
 &= \frac{{\mathbf{f}}(t+\Delta{t})-2{\mathbf{f}}(t)+{\mathbf{f}}(t-\Delta{t})}{\Delta{t}^2} = \frac{2}{{\Delta{t}^2}}\bigg( \frac{{\mathbf{f}}(t+\Delta{t})+{\mathbf{f}}(t-\Delta{t})}{2}-{\mathbf{f}}(t) \bigg ),
\end{align}
where we assume that $\Delta{t}$=1 and the vectors $\mathbf{f}(t-\Delta{t})$, $\mathbf{f}(t)$ and $\mathbf{f}(t+\Delta{t})$ are normalized in length. 

Next, given three sequential subspaces ${\mathcal{S}}_1, {\mathcal{S}}_2$ and ${\mathcal{S}}_3$ in $n$-dimensional vector space as shown in Fig.\ref{fig:three subspaces}(b), we correspond the mean vector, $\mathbf{m}$=$\frac{{\mathbf{f}}(t-1)+{\mathbf{f}}(t+1)}{2}$, to the Karcher mean (principal component subspace), ${\mathcal{M}}({\mathcal{S}}_1, {\mathcal{S}}_3)$, representing the mean subspace of ${\mathcal{S}}_1$ and ${\mathcal{S}}_2$, while ${\mathbf{f}}(t)$ corresponds to ${\mathcal{S}}_2$ as well. 
Then, we correspond the difference vector, $\mathbf{m}-\mathbf{f}(t)$, to the difference subspace between ${\mathcal{M}({\mathcal{S}}_1, {\mathcal{S}}_3)}$ and ${\mathcal{S}}_2$. Remember we have already generalized the difference vector $\mathbf{d}$ between two normalized vectors to the difference subspace $\mathcal{D}$ between two subspaces as described in Fig.\ref{fig:DS}(a). Thus, this correspondence is natural when $\mathbf{f}(t-1)$ and $\mathbf{f}(t+1)$ are close enough such that $||\mathbf{m}||\approx 1.0$. 

Accordingly, we define the second-order difference subspace ${\mathcal{D}}^{2}$ of ${\mathcal{S}}_1, {\mathcal{S}}_2$ and ${\mathcal{S}}_3$ as the difference subspace between ${\mathcal{S}}_2$ and ${\mathcal{M}({\mathcal{S}}_1, {\mathcal{S}}_3)}$.
\begin{dfn}[Second-order difference subspace]
\begin{align}
    {\mathcal{D}}^{2}(\mathcal{S}_1,\mathcal{S}_2, \mathcal{S}_3) = {\mathcal{D}}(S_{2}, \mathcal{M}(\mathcal{S}_1,\mathcal{S}_3)). \nonumber
  \end{align}    
\end{dfn}

\begin{figure}[tb]  
  \begin{minipage}{0.33\linewidth}
  \centering
  \includegraphics[scale=0.37]{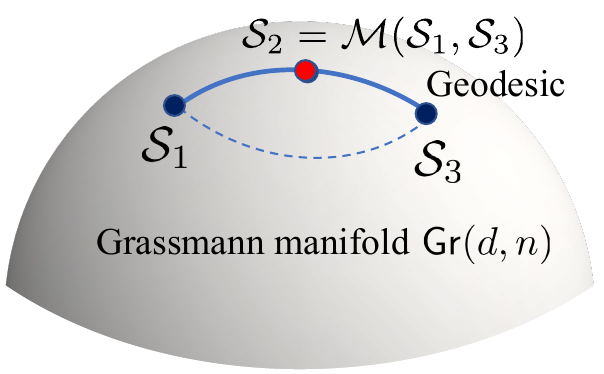}\\
  (a) 
 \end{minipage} 
  \begin{minipage}{0.33\linewidth}
  \centering
  \includegraphics[scale=0.37]{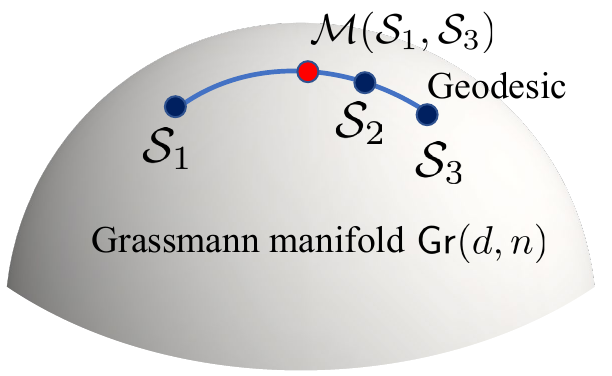}\\
  (b) 
 \end{minipage} 
 \begin{minipage}{0.33\linewidth}
 \centering
 \includegraphics[scale=0.37]{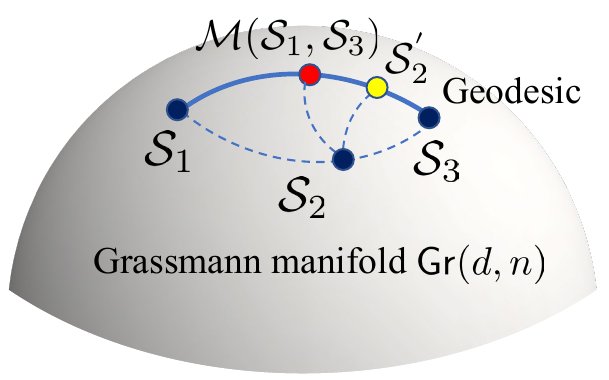}\\
 (c) 
  \end{minipage}
  \caption{Temporal sequential subspaces, $\mathcal{S}_1$, $\mathcal{S}_2$ and $\mathcal{S}_3$ on Grassman manifold $\Gr(d,n)$. (a) Zero second-order DS $\mathcal{D}^2$ in the case that two subspaces ${\mathcal{S}}_2$ and ${\mathcal{M}}$ coincide completely. (b) Second-order DS $\mathcal{D}^2$ in the case that ${\mathcal{S}}_2$ is slightly shifted from ${\mathcal{M}}$ along the geodesic. (c) Second-order DS $\mathcal{D}^2$ in that ${\mathcal{S}}_2$ is not on the geodesic.}
  \label{fig:def_secondDS}
\end{figure}

Further, we define the magnitude $\Mag({\mathcal{D}}^{2})$ of second-order DS 
according to Definition \ref{eq:mag1}.
\begin{dfn}[Magnitude of second-order difference subspace]
\begin{align}
   \Mag({\mathcal{D}}^{2}(\mathcal{S}_1,\mathcal{S}_2, \mathcal{S}_3)) = \Mag({\mathcal{D}}(\mathcal{S}_2,\mathcal{M}(\mathcal{S}_1,\mathcal{S}_3)).
   \label{eq:defmag}
\end{align}
\end{dfn}

\section{Mechanism of second-order DS}
\label{sec:geometry}
\subsection{Representation based on Grassman manifold}
Assuming three subspaces ${\mathcal{S}}_1, {\mathcal{S}}_2$ and ${\mathcal{S}}_3$ of identical dimension $d$, we can understand the geometrical mechanism of the proposed definition more clearly by considering the Grassmann manifold $\Gr(d,n)$, where each $d$-dimensional subspace in $n$-dimensional vector space is represented by a point on $\Gr(d,n)$~\cite{bendokat2024grassmann}.

The geometry of three sequential subspaces in the vector space shown in Fig.\ref{fig:three subspaces}(b) can be visualized as Fig.\ref{fig:def_secondDS}(c) on the Grassmann manifold. A local neighborhood of $n$-dimensional vectors in $n$-dimensional vector space can be identified with Grassmann manifold $\Gr(1,n)$. In this sense, our definition of second-order DS can be regarded as a natural generalization of the second-order central difference on $\Gr(1,n)$ to that on $\Gr(d,n)$.

\subsection{Orthogonal decomposition of magnitude}
{\color{black}
We introduce a geodesic path $l({\mathcal{S}}_i, {\mathcal{S}}_j)$ through ${\mathcal{S}}_i$ and ${\mathcal{S}}_j$, which is the shortest path between points ${\mathcal{S}}_i$ and ${\mathcal{S}}_j$ on $\Gr(d,n)$.
\vspace{1mm}
\begin{lemma}
\label{lemma:sum}
Sum subspace ${\mathcal{W}}({\mathcal{S}}_i, {\mathcal{S}}_j)$ is equal to sum subspace ${\mathcal{W}}(\{{\mathcal{S}}^*\})$ where $\{{\mathcal{S}}^{*}\}$ is a set of all the subspaces on the geodesic ${l}$ through the points ${\mathcal{S}}_i$ and ${\mathcal{S}}_j$ on $\Gr(d,n)$ \cite{Kobayashi_2023_BMVC}. 
The dimensions of  ${\mathcal{S}}_i$, ${\mathcal{S}}_j$ and ${\mathcal{W}}({\mathcal{S}}_i,{\mathcal{S}}_j)$ are $d$, $d$ and $2d$, respectively.

{\noindent}Proof: see Lemma \ref{lemma:sumspace} in the appendix.
\label{lemma:sumspace}
\end{lemma}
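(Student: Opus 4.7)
The plan is to prove the two set-inclusions $\mathcal{W}(\mathcal{S}_i,\mathcal{S}_j) \subseteq \mathcal{W}(\{\mathcal{S}^*\})$ and $\mathcal{W}(\{\mathcal{S}^*\}) \subseteq \mathcal{W}(\mathcal{S}_i,\mathcal{S}_j)$ separately, using the explicit canonical-angle parametrization of a geodesic on $\Gr(d,n)$.

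The inclusion $\mathcal{W}(\mathcal{S}_i,\mathcal{S}_j) \subseteq \mathcal{W}(\{\mathcal{S}^*\})$ is immediate: a geodesic between $\mathcal{S}_i$ and $\mathcal{S}_j$ passes through both endpoints, so $\mathcal{S}_i, \mathcal{S}_j \in \{\mathcal{S}^*\}$ and hence both are contained in the sum subspace over the family, which swallows their direct sum.

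For the reverse inclusion I would invoke Lemma~\ref{lemma:new orthonormal basis} in the no-intersection case to obtain principal vector pairs $\{\mathbf{u}_k, \mathbf{v}_k\}_{k=1}^d$ with $\mathbf{u}_k \in \mathcal{S}_i$, $\mathbf{v}_k \in \mathcal{S}_j$, forming the canonical angles $\theta_k$. Setting $\mathbf{w}_k = (\mathbf{v}_k - \mathbf{u}_k \cos\theta_k)/\sin\theta_k$, which by construction lies in $\Span(\mathbf{u}_k, \mathbf{v}_k)$ and is orthogonal to $\mathbf{u}_k$, the standard parametrization of the Grassmann geodesic gives
\[
\mathcal{S}^*(t) = \Span\{\mathbf{u}_k \cos(t\theta_k) + \mathbf{w}_k \sin(t\theta_k)\}_{k=1}^d.
\]
Each spanning vector sits inside $\Span(\mathbf{u}_k, \mathbf{v}_k) \subseteq \mathcal{W}(\mathcal{S}_i, \mathcal{S}_j)$, so $\mathcal{S}^*(t) \subseteq \mathcal{W}(\mathcal{S}_i,\mathcal{S}_j)$ for every $t$, and therefore $\mathcal{W}(\{\mathcal{S}^*\}) \subseteq \mathcal{W}(\mathcal{S}_i,\mathcal{S}_j)$. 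The dimension claim $\Dim\,\mathcal{W}(\mathcal{S}_i,\mathcal{S}_j) = 2d$ then follows because, under the no-intersection hypothesis, the $d$ two-planes $\Span(\mathbf{u}_k, \mathbf{v}_k)$ are mutually orthogonal and each $2$-dimensional.

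The main obstacle is simply anchoring the explicit geodesic formula on $\Gr(d,n)$ with its canonical invariant metric; I would cite \cite{bendokat2024grassmann}, which is already referenced in the paper. Once that parametrization is available, the rest is purely linear-algebraic: each canonical $2$-plane rotates invariantly along the geodesic, so the union $\bigcup_t \mathcal{S}^*(t)$ trivially fits inside the fixed $2d$-dimensional sum $\mathcal{W}(\mathcal{S}_i, \mathcal{S}_j)$, and the endpoints already generate that sum.
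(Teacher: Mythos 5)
Your proof is correct, but it takes a genuinely different route from the paper's. The paper represents $\mathcal{W}(\{\mathcal{S}^*\})$ by the integrated projector $\mathbf{G}=\int_0^1 \hat{\mathbf{l}}(t)\hat{\mathbf{l}}(t)^\top\,dt$ and $\mathcal{W}(\mathcal{S}_i,\mathcal{S}_j)$ by $\mathbf{H}=\mathbf{\Phi}\mathbf{\Phi}^\top+\mathbf{\Psi}\mathbf{\Psi}^\top$, then cites the eigendecompositions from \cite{Kobayashi_2023_BMVC}: both matrices are diagonalized by the same eigenvector blocks $[\mathbf{M},\mathbf{D}]$, with eigenvalues $1\pm\sinc\,\theta_k$ and $1\pm\cos\theta_k$ respectively, all strictly positive when no canonical angle vanishes, so the two ranges coincide. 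Your argument replaces this spectral computation with a direct double inclusion: the endpoints belong to the geodesic family, which gives $\mathcal{W}(\mathcal{S}_i,\mathcal{S}_j)\subseteq\mathcal{W}(\{\mathcal{S}^*\})$ for free, and the explicit parametrization $\mathbf{u}_k\cos(t\theta_k)+\mathbf{w}_k\sin(t\theta_k)$ with $\mathbf{w}_k=(\mathbf{v}_k-\mathbf{u}_k\cos\theta_k)/\sin\theta_k$ confines every point of the geodesic to the fixed orthogonal direct sum of the $2$-planes $\Span(\mathbf{u}_k,\mathbf{v}_k)$, giving the reverse inclusion and the dimension count $2d$. Your route is more elementary and self-contained (it needs only the geodesic formula, not the eigenstructure of the integrated projector), and it applies verbatim to the entire geodesic \emph{through} the two points rather than only the segment $t\in[0,1]$ over which the paper integrates; the paper's route, in exchange, produces the quantitative eigenvalue matrices $\mathbf{\Lambda}_{\pm}$ that are reused elsewhere in its analysis. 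Both arguments rest on the same tacit hypothesis, which you state explicitly and the paper does not: all canonical angles must be nonzero (no intersection), since otherwise $\Span(\mathbf{u}_k,\mathbf{v}_k)$ degenerates, $\mathbf{w}_k$ is undefined, and the sum subspace has dimension strictly less than $2d$.
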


The lemma shows that the $2d$-dimensional sum subspace ${\mathcal{W}}({\mathcal{S}}_1, {\mathcal{S}}_3)$ contains any $d$-dimensional subspaces on the geodesic path ${l}({\mathcal{S}}_1, {\mathcal{S}}_3)$. Motivated by this characteristic, we introduce the subspace projection $\omega({\mathcal{S}}_2)$ of ${\mathcal{S}}_2$ onto ${\mathcal{W}}({\mathcal{S}}_1, {\mathcal{S}}_3)$ for obtaining its projected point (subspace) on ${l}({\mathcal{S}}_1, {\mathcal{S}}_3)$. 

\vspace{3mm}
\begin{dfn}[Subspace projection]
\label{def:subspace projection}
The subspace projection $\omega({\mathcal{S}})$ of $d_1$-dimensional subspace $\mathcal{S}$ onto $d_2 (\geq d_1)$-dimensional subspace $\mathcal{W}$ is defined as follows:
\begin{align}
\label{min}
\omega(\mathcal{S}) =\underset{\mathcal{S}^\prime{} \in \mathcal{W}} {\operatorname{argmin}}~\rho(\mathcal{S}, \mathcal{S}^{\prime}),
\end{align}
where $\rho(\mathcal{S}, \mathcal{S}^{\prime})$ indicates the geodesic distance between $\mathcal{S}$ and $\mathcal{S}^{\prime}$.
\end{dfn}

\vspace{2mm}
\begin{lemma}
\label{lemma:subspace projection}
The subspace projection $\omega({\mathcal{S}})$ of $d_1$-dimensional subspace $\mathcal{S}$ onto $d_2(\geq d_1)$-dimensional subspace $\mathcal{W}$ can be calculated by using the singular value decomposition (SVD) as flows:
\begin{align}
{\rm SVD}: {\mathbf{W}}^{\top} {\mathbf{S}} = {\mathbf{U}}{\mathbf{\Sigma}}{\mathbf{V}}^{\top},\\
\omega({\mathcal{S}}): {\mathcal{S}} \rightarrow {\mathcal{S}}^{\prime} \Rightarrow  {\mathbf{S}} \rightarrow {\mathbf{S}}^{\prime}={\mathbf{WU}},
\end{align}
where ${\mathbf{W}} \in \mathbb{R}^{n{\times}d_2}$ and ${\mathbf{S}} \in \mathbb{R}^{n{\times}d_1}$ are the orthonormal basis matrices of $\mathcal{W}$ and $\mathcal{S}$, respectively.
\end{lemma}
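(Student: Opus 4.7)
The plan is to recast the optimization in Definition~\ref{def:subspace projection} as a constrained singular-value problem, apply a Rayleigh/interlacing argument to obtain a componentwise upper bound on the cosines of the canonical angles, and then verify that the bound is saturated by $\mathbf{S}'=\mathbf{W}\mathbf{U}$. Recall that the geodesic distance on $\Gr(d_1,n)$ equals the $\ell^{2}$-norm of the vector of canonical angles, $\rho(\mathcal{S},\mathcal{S}')^{2}=\sum_{i=1}^{d_1}\theta_i^{2}$, where $\cos\theta_i$ is the $i$th singular value of $\mathbf{S}'^{\top}\mathbf{S}$. Since $\arccos^{2}(\cdot)$ is strictly decreasing on $[0,1]$, minimizing $\rho$ over $\mathcal{S}'\subseteq\mathcal{W}$ is equivalent to making the singular values of $\mathbf{S}'^{\top}\mathbf{S}$ as large as possible, entry by entry in the decreasing ordering.

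Next I would parametrize the feasible set. Because $\mathbf{W}$ has orthonormal columns spanning $\mathcal{W}$, any orthonormal basis $\mathbf{S}'$ of a $d_1$-dimensional subspace $\mathcal{S}'\subseteq\mathcal{W}$ can be written as $\mathbf{S}'=\mathbf{W}\mathbf{Q}$ with $\mathbf{Q}\in\mathbb{R}^{d_2\times d_1}$ satisfying $\mathbf{Q}^{\top}\mathbf{Q}=\mathbf{I}_{d_1}$; this follows from $\mathbf{S}'^{\top}\mathbf{S}'=\mathbf{Q}^{\top}\mathbf{W}^{\top}\mathbf{W}\mathbf{Q}=\mathbf{Q}^{\top}\mathbf{Q}=\mathbf{I}_{d_1}$. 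Consequently $\mathbf{S}'^{\top}\mathbf{S}=\mathbf{Q}^{\top}\mathbf{W}^{\top}\mathbf{S}$. Introduce the symmetric positive semidefinite matrix $M=\mathbf{W}^{\top}\mathbf{S}\mathbf{S}^{\top}\mathbf{W}\in\mathbb{R}^{d_2\times d_2}$, and observe that $\mathbf{S}'^{\top}\mathbf{S}\mathbf{S}^{\top}\mathbf{S}'=\mathbf{Q}^{\top}M\mathbf{Q}$, while the eigenvalues of $M$ are precisely $\diag(\mathbf{\Sigma})^{2}$ by the SVD of $\mathbf{W}^{\top}\mathbf{S}$.

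The heart of the proof is the Poincar\'e separation (Cauchy interlacing) theorem: for any matrix $\mathbf{Q}$ with orthonormal columns, $\lambda_i(\mathbf{Q}^{\top}M\mathbf{Q})\le\lambda_i(M)$ for each $i$. Translating back, this gives $\sigma_i(\mathbf{S}'^{\top}\mathbf{S})\le\sigma_i(\mathbf{W}^{\top}\mathbf{S})$ componentwise, hence $\theta_i'\ge\theta_i$ for every $i$, so $\rho(\mathcal{S},\mathcal{S}')^{2}\ge\sum_i\theta_i^{2}$. Substituting $\mathbf{Q}=\mathbf{U}$ yields $\mathbf{S}'^{\top}\mathbf{S}=\mathbf{U}^{\top}\mathbf{U}\mathbf{\Sigma}\mathbf{V}^{\top}=\mathbf{\Sigma}\mathbf{V}^{\top}$, whose singular values are exactly the diagonal of $\mathbf{\Sigma}$; the lower bound is therefore attained, and $\omega(\mathcal{S})$ is spanned by $\mathbf{W}\mathbf{U}$.

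I expect the main obstacle to be delivering the bound on individual singular values rather than merely on their sum: the weaker Von Neumann majorization would leave open the possibility that a different $\mathbf{Q}$ trades a smaller leading cosine for a larger trailing one and thereby reduces $\sum\theta_i^{2}$. Poincar\'e separation resolves this cleanly by producing the required componentwise ordering. The remainder is bookkeeping: fixing a decreasing ordering for the canonical angles and noting that when $\mathbf{\Sigma}$ has a simple spectrum the attaining $\mathcal{S}'$ equals $\mathrm{span}(\mathbf{W}\mathbf{U})$ uniquely, while otherwise it is determined up to an orthogonal rotation inside the corresponding eigenspaces, which does not affect the span.
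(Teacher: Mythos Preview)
Your argument is correct and follows a genuinely different route from the paper. The paper first replaces the geodesic-distance minimization by the canonical-correlation maximization $\max_{\mathbf{A},\mathbf{B}}\operatorname{tr}(\mathbf{A}^{\top}\mathbf{W}^{\top}\mathbf{S}\mathbf{B})$ under unit-norm constraints, applies Lagrange multipliers to derive that any optimizer $\mathbf{W}\mathbf{A}$ satisfies the eigenvector equation $\mathbf{P}_{1}\mathbf{P}_{2}(\mathbf{W}\mathbf{A})=(\mathbf{W}\mathbf{A})\widehat{\mathbf{\Sigma}}$ with $\mathbf{P}_{1}=\mathbf{W}\mathbf{W}^{\top}$, $\mathbf{P}_{2}=\mathbf{S}\mathbf{S}^{\top}$, and then checks separately that $\mathbf{W}\mathbf{U}$ satisfies the same equation. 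Your proof instead parametrizes $\mathbf{S}'=\mathbf{W}\mathbf{Q}$ and invokes Poincar\'e separation on $\mathbf{Q}^{\top}M\mathbf{Q}$ with $M=\mathbf{W}^{\top}\mathbf{S}\mathbf{S}^{\top}\mathbf{W}$ to obtain the componentwise bound $\sigma_{i}(\mathbf{S}'^{\top}\mathbf{S})\le\sigma_{i}(\mathbf{W}^{\top}\mathbf{S})$, after which the lower bound on $\rho$ and its attainment at $\mathbf{Q}=\mathbf{U}$ are immediate. Your approach buys two things: it avoids the delicate passage from ``minimize $\sum_{i}\theta_{i}^{2}$'' to ``maximize $\sum_{i}\cos\theta_{i}$'' (which are different objectives, even if they share the optimizer), and the entrywise interlacing bound shows at once that $\mathbf{W}\mathbf{U}$ is optimal for \emph{every} Schur-monotone distance built from the canonical angles, not just the geodesic one. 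The paper's Lagrangian route, by contrast, is more mechanical and yields as a free byproduct the dual identity $\mathbf{P}_{2}\mathbf{P}_{1}(\mathbf{S}\mathbf{V})=(\mathbf{S}\mathbf{V})\mathbf{\Sigma}^{2}$, tying the canonical vectors on the $\mathcal{S}$ side to the same SVD.
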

For the proof, see \ref{lemma:subspace projection} in the appendix.

\vspace{2mm}
On the basis of the subspace projection of $d$-dimensional ${\mathcal{S}}_2$ onto $2d$-dimensional ${\mathcal{W}}({\mathcal{S}}_1, {\mathcal{S}}_3)$, the magnitude of second-order DS $\Mag({\mathcal{D}}^{2})$ can be represented as the sum of two components: 
$\Mag({\mathcal{D}}(\mathcal{S}_2,\mathcal{M}(\mathcal{S}_1,\mathcal{S}_3)) \simeq \Mag({\mathcal{D}}(\mathcal{S}_2,\mathcal{W}(\mathcal{S}_1,\mathcal{S}_3))\nonumber + \Mag({\mathcal{D}}(\omega(\mathcal{S}_2),\mathcal{M}(\mathcal{S}_1,\mathcal{S}_3))$, where one component is in the direction orthogonal to the geodesic and another is along the geodesic as shown in Fig.\ref{fig:def_secondDS}(c). 

It may seem strange that the subspace projection can link the two different operations in Euclidean and non-Euclidean spaces (Grassman manifold). We will provide proof of the validity of the subspace projection and the orthogonal decomposition in the future, although a numerical simulation supports them.
} 

\if 0
\begin{lemma}
\label{lemma:pcs}
Principal component subspace ${\mathcal{M}}({\mathcal{S}}_1, {\mathcal{S}}_3)$ is the middle point between two points 
${\mathcal{S}}_1$ and ${\mathcal{S}}_3$ along the geodesic ${l}({\mathcal{S}}_1, {\mathcal{S}}_3)$ on $\Gr(d,n)$ \cite{Kobayashi_2023_BMVC}.
\end{lemma}
\fi
\if 0
\begin{lemma}
\begin{align}
   \Mag({\mathcal{D}}(\mathcal{S}_2,\mathcal{M}(\mathcal{S}_1,\mathcal{S}_3)) &\simeq \Mag({\mathcal{D}}(\mathcal{S}_2,\mathcal{W}(\mathcal{S}_1,\mathcal{S}_3))\nonumber + \Mag({\mathcal{D}}(\mathcal{S}_2^{'},\mathcal{M}(\mathcal{S}_1,\mathcal{S}_3)),
   \label{eq:defmag}
\end{align}
{\color{black}
where ${\mathcal{S}}^{\prime}_{2}$ is the projection of $\mathcal{S}_2$ onto $\mathcal{W}({\mathcal{S}}_1$, ${\mathcal{S}}_3)$, thus being ${\mathcal{S}}^{\prime}_{2}$ $\in \mathcal{W}({\mathcal{S}}_1$, ${\mathcal{S}}_3)$,   
}
For the details, see the appendix.
\end{lemma}
\fi

We analyze how the second-order DS works based on the decomposition in the following three cases.
Fig.\ref{fig:def_secondDS}(a) shows the case that ${\mathcal{S}}_2$ and ${\mathcal{M}}$ coincide completely. In this case, the magnitude of the second-order DS is zero, representing no acceleration. Fig.\ref{fig:def_secondDS}(b) shows the case that ${\mathcal{S}}_2$ shifts from ${\mathcal{M}}$ along the geodesic on $\Gr(d,n)$; ${\mathcal{M}}$, ${\mathcal{S}}_2$ $\in$ ${\mathcal{W}}({\mathcal{S}}_1, {\mathcal{S}}_3)$ in vector space $\mathbb{R}^n$. In this case, the magnitude of the second-order DS corresponds to acceleration only along the geodesic, exhibiting $\Mag({\mathcal{D}}(\mathcal{S}_2,\mathcal{W}(\mathcal{S}_1,\mathcal{S}_3))>0$ while {\color{black} $\Mag({\mathcal{D}}(\omega(\mathcal{S}_2),\mathcal{W}(\mathcal{S}_1,\mathcal{S}_3)) = 0$.}
Fig.\ref{fig:def_secondDS}(c) shows the case that ${\mathcal{S}}_2$ departs from the geodesic; ${\mathcal{M}}$ $\in$ ${\mathcal{W}}({\mathcal{S}}_1, {\mathcal{S}}_3)$ and ${\mathcal{S}}_2$ $\notin$ ${\mathcal{W}}({\mathcal{S}}_1, {\mathcal{S}}_3)$ in $\mathbb{R}^n$, where the magnitude of second-order DS consists of accelerations in two directions. 

{\color{black}
We can also introduce the tangent space $T_M$ of $\Gr(d,n)$ with origin at ${\mathcal{M}}$ to understand the mechanism of the orthogonal decomposition. $T_M$ is useful to analyze the Grassman manifold since it is a vector space with ${\mathcal{M}}$ as its origin \cite{absil2004riemannian}. Fig.~\ref{fig:grassman_def}(a) shows the points ${\mathcal{\hat{S}}}_1$, ${\mathcal{\hat{S}}}_2$, ${\mathcal{\hat{S}}}_3$ and ${\mathcal{\hat{S}}}^{'}_2$ in the logarithm mapping~\cite{bendokat2024grassmann} of ${\mathcal{S}}_1$, ${\mathcal{S}}_2$, ${\mathcal{S}}_3$ and ${\mathcal{S}}^{'}_2$ onto the tangent space $T_M$, respectively. In Fig.~\ref{fig:grassman_def}(b), we can also see that the distance between $\mathcal{\hat{S}}_2$ and $\mathcal{M}$ can be decomposed into two orthogonal components: the distance between ${\mathcal{\hat{S}}}_2$ and ${\mathcal{\hat{S}}}_{2}^{\prime}$ and that between ${\mathcal{\hat{S}}}_{2}^{\prime}$ and $\mathcal{M}$, 
where ${\mathcal{\hat{S}}}^{\prime}_2$ indicates the orthogonal projection of ${\mathcal{\hat{S}}}_2$ onto the geodesic line in the tangent space. 
}


\begin{figure}[tb]
 \begin{minipage}{0.48\linewidth}
  \centering
  \includegraphics[scale=0.38]{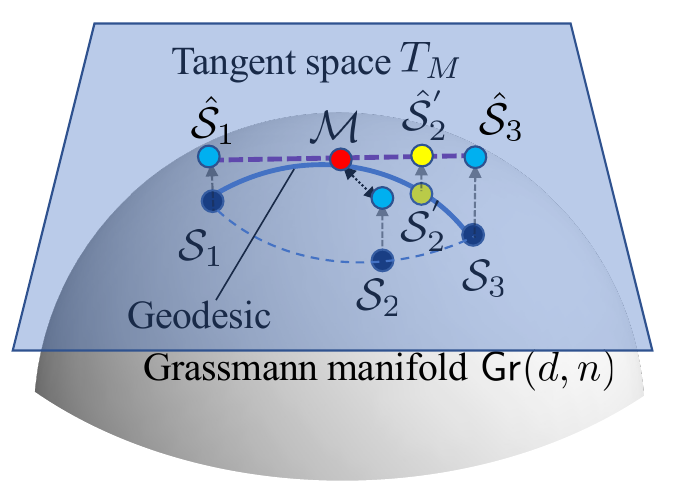}\\
  (a) Logarithmic mapping of the related subspaces on the tangent space $T_M$ of $\Gr(d,n)$.
  \end{minipage}  
  \begin{minipage}{0.48\linewidth}
  \centering
  \includegraphics[scale=0.38]{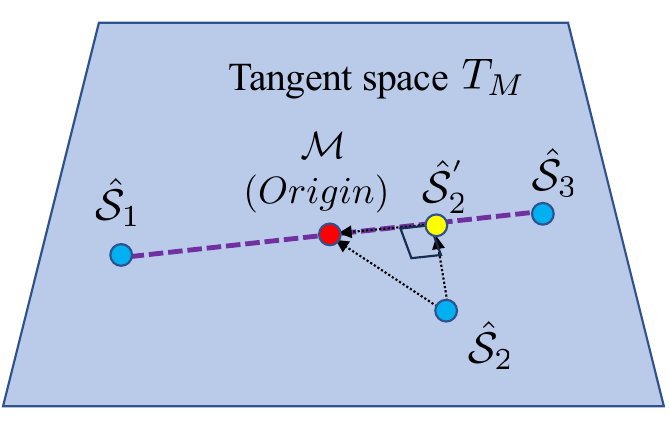}\\
  (b) Orthogonal decomposition on the tangent space $T_M$.
 \end{minipage}\\
 \caption{Geometrical relationship of the related subspaces on the $\Gr(d,n)$ and the tangent space  $T_M$ of $\Gr(d,n)$.} 
   \label{fig:grassman_def}
\end{figure}

\section{Numerical experiments}
\label{sec:numexps}
In this section, we provide a brief numerical study on two applications to demonstrate the validity and naturalness of the definitions of our first/second-order DSs. The first application is a temporal analysis of deforming 3D shapes using a three-dimensional shape subspace, representing the changing pose of a human. This experimental setting is comparatively simple as there is no intersection between the sequential shape subspaces and the dimension of subspaces is only three. The second application is a time sequence analysis of biometric signals using signal subspaces generated by applying the singular spectrum analysis (SSA) to an iput signal. This setting is more complicated than the first one in that each signal subspace has a higher dimension than three, and there is an intersection between sequential signal subspaces.

\subsection{Temporal analysis of deforming 3D shape}
In this experiment, we calculate the magnitudes of the first/second-order DSs from the temporal sequence of shape subspaces. In this setting, a 3D pose of a human at time $t$ is represented by a three-dimensional subspace ${\mathcal{S}}_t$ in high-dimensional vector space\cite{shapesp,shapesp2010}. A shape subspace can be generated from a set of 3D dot points extracted from a walking or jumping human. The significant merit of the shape subspace-based representation is that it is invariant to any affine transformation against the whole set of 3D points. In other words, the shape subspace is invariant to viewpoints.

Given $p$ 3D points of ${\{x_i,y_i,z_i\}}_{i=1}^p$ of a 3D object (whole human body) at frame time $t$, we first generate the following matrix $\mathbf{V}$:
\begin{equation}
{\mathbf{V}}=
\begin{pmatrix}
x_1 - m_x& y_1 -m_y & z_1 -m_z\\
x_2 - m_x& y_2 -m_y & z_2 -m_z\\
\vdots & \vdots & \vdots\\
x_p - m_x & y_p -m_y & z_p -m_z \nonumber
\end{pmatrix},
\end{equation}
where $(m_x,m_y,m_z)$ indicates the center of gravity of all $p$ points.
We then apply the Gram-Schmidt orthogonalization to the column vectors of $\mathbf{V}$. The orthogonalized column vectors ${\mathbf{\hat{V}}}$ is the basis of shape subspace ${\mathcal{S}}_t$. Finally, a task of temporal analysis of 3D pose is converted to measuring the variations of first/second-order DSs between sequential shape subspaces in $p$-dimensional space.

 \begin{figure}[tb]
  \begin{minipage}[t]{0.48\linewidth}
  \centering
   \includegraphics[scale=0.2]{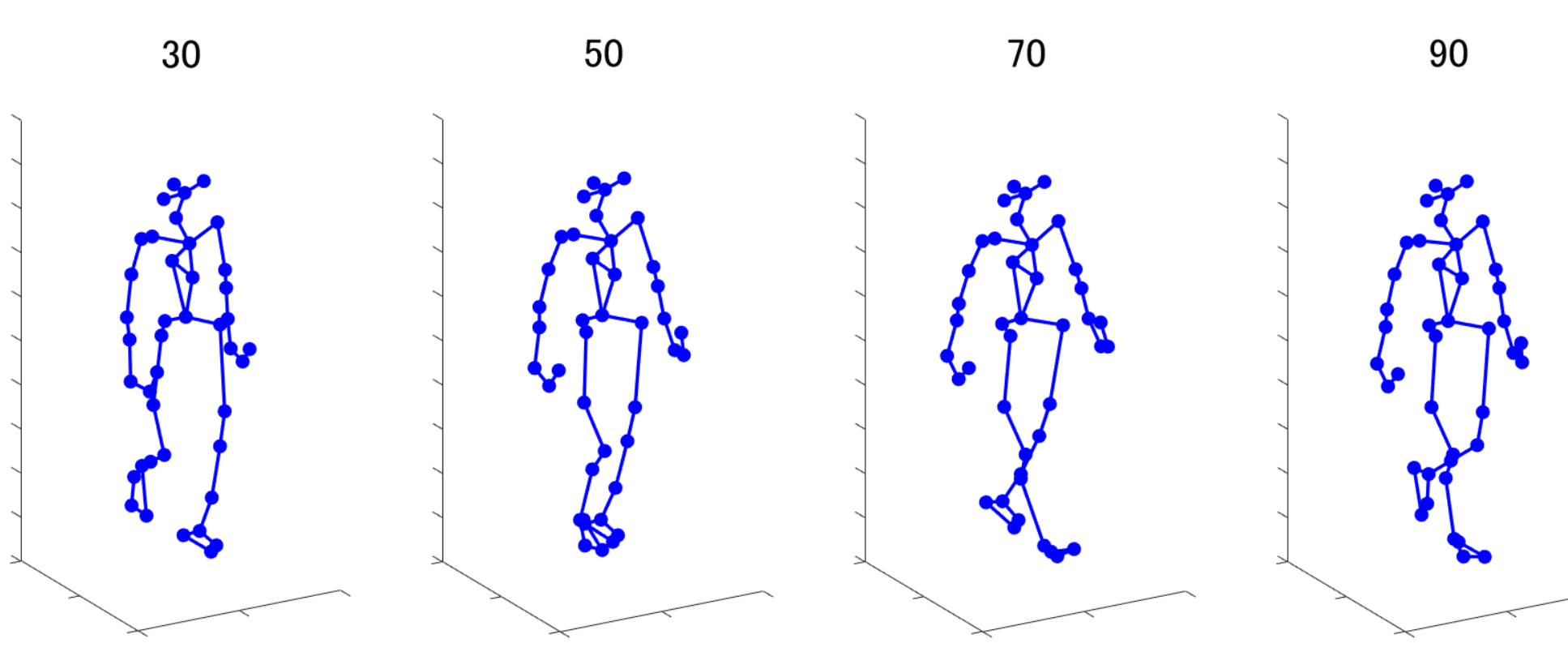}
   (a) walking
  \end{minipage}
  \hspace{7mm}
   \begin{minipage}[t]{0.48\linewidth}
  \centering
   \includegraphics[scale=0.2]{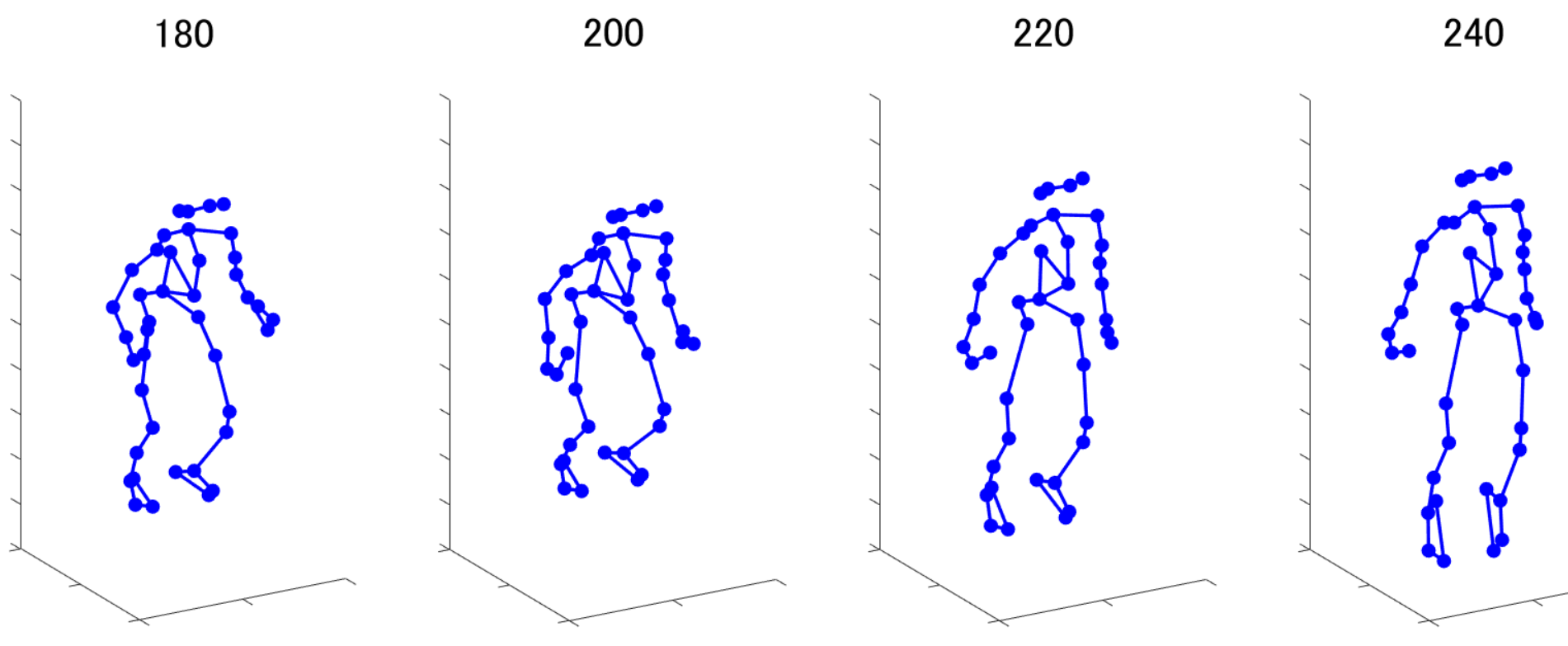}
   (b) jumping, turning around
  \end{minipage}  
  \caption{Sequential sets of 3D dots of walking and jumping data. The number of dots is 41. Each number indicates the frame number.}
   \label{fig:input3Ddots}
\end{figure}

\begin{figure}[tb]
\centering
 \begin{minipage}[t]{0.24\linewidth}
  \centering
  \includegraphics[scale=0.21]{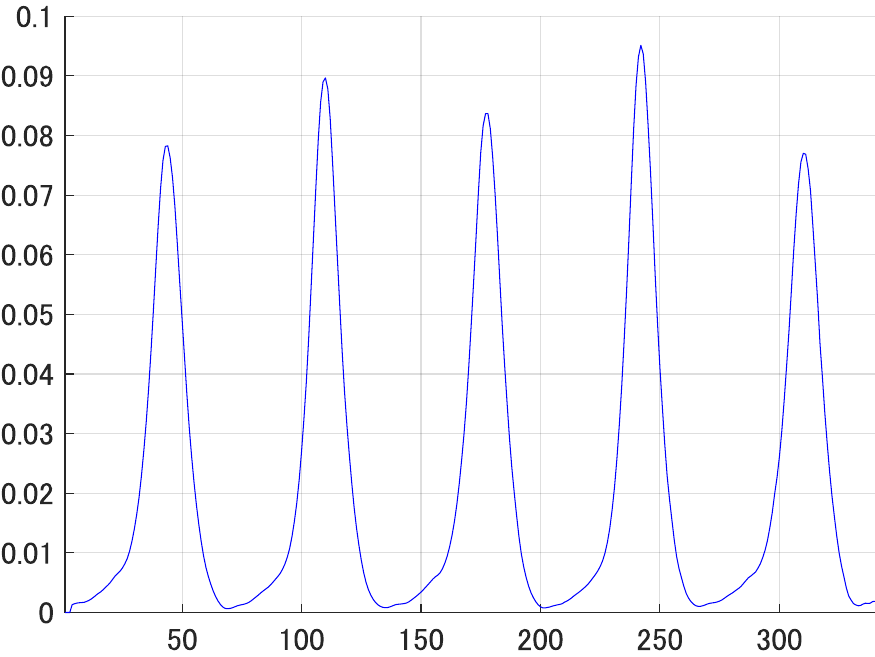}\\
  (a) First-order DS, ${\mathcal{D}}(\mathcal{S}_1,\mathcal{S}_3)$
 \end{minipage}
\begin{minipage}[t]{0.24\linewidth}
  \centering
  \includegraphics[scale=0.21]{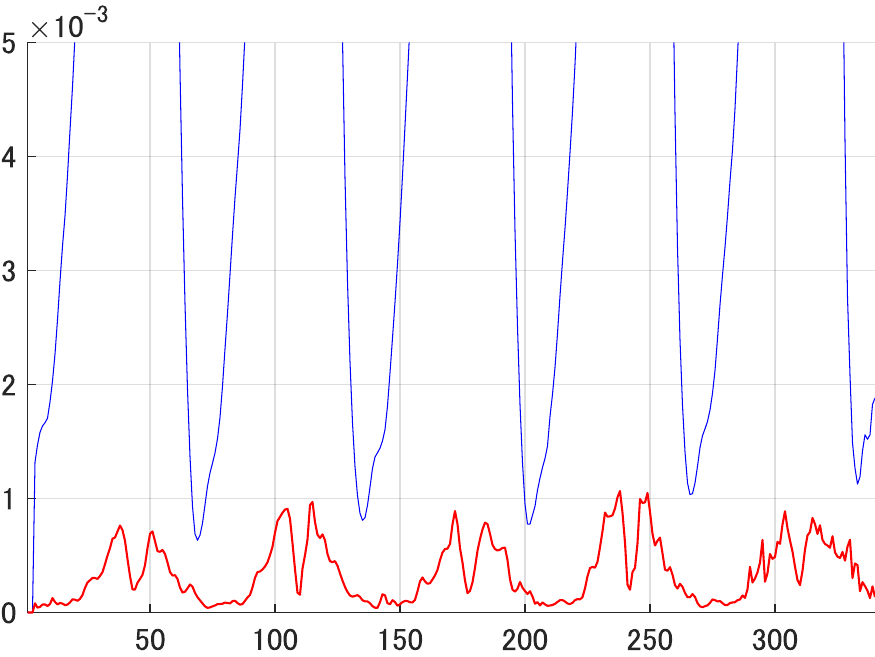}\\
  (b) Second-order DS, ${\mathcal{D}}(S_{2}, \mathcal{M}(\mathcal{S}_1,\mathcal{S}_3))$(red) and ${\mathcal{D}}(\mathcal{S}_1,\mathcal{S}_3)$ (blue).
 \end{minipage}
\begin{minipage}[t]{0.24\linewidth}
  \centering
  \includegraphics[scale=0.21]{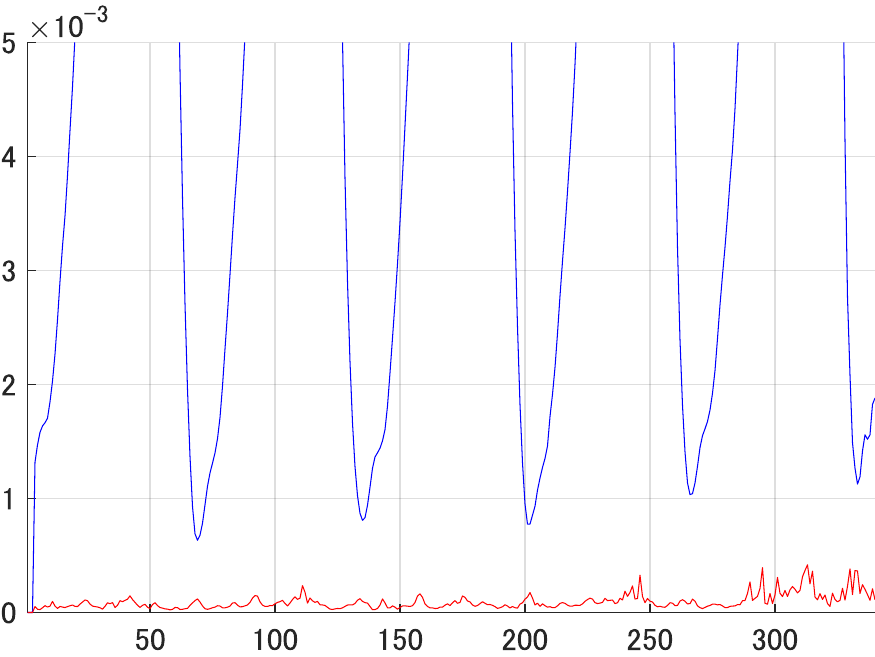}\\
  (c) The orthogonal component to the geodesic, ${\mathcal{D}}({\mathcal{S}}_{2}, \mathcal{W}(\mathcal{S}_1,\mathcal{S}_3))$
 \end{minipage}
 \begin{minipage}[t]{0.24\linewidth}
  \centering
  \includegraphics[scale=0.21]{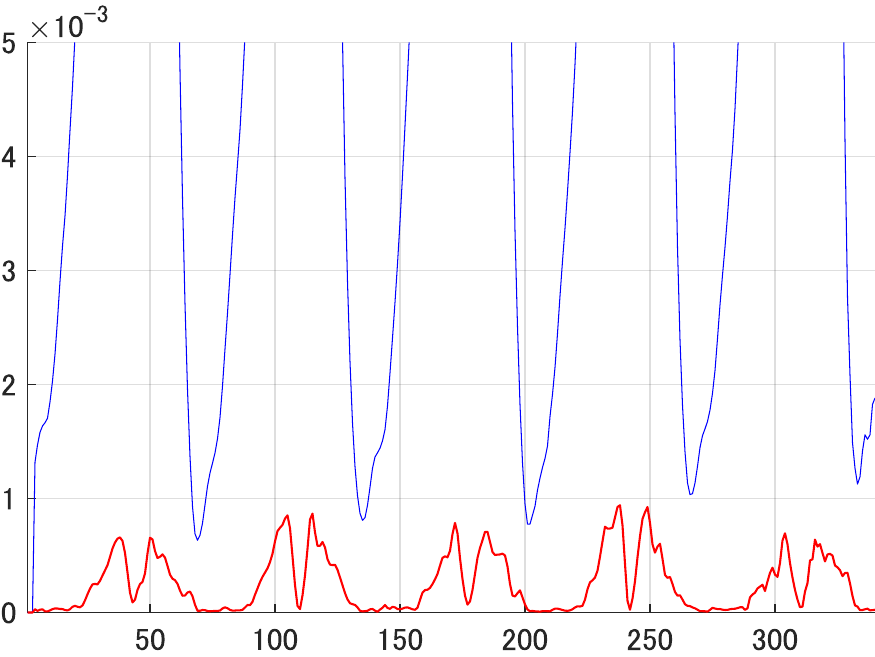}\\
  (d) The component along the geodesic, ${\mathcal{D}}({\mathcal{S}}^{\prime}_{2}, \mathcal{M}(\mathcal{S}_1,\mathcal{S}_3))$
 \end{minipage}
 \caption{The magnitudes of first/second-order DSs on 3D dots data of walking.}
 \label{fig:output3Ddots-walking}
 \end{figure}

\begin{figure}[tb]
  \centering
  \includegraphics[scale=0.18]{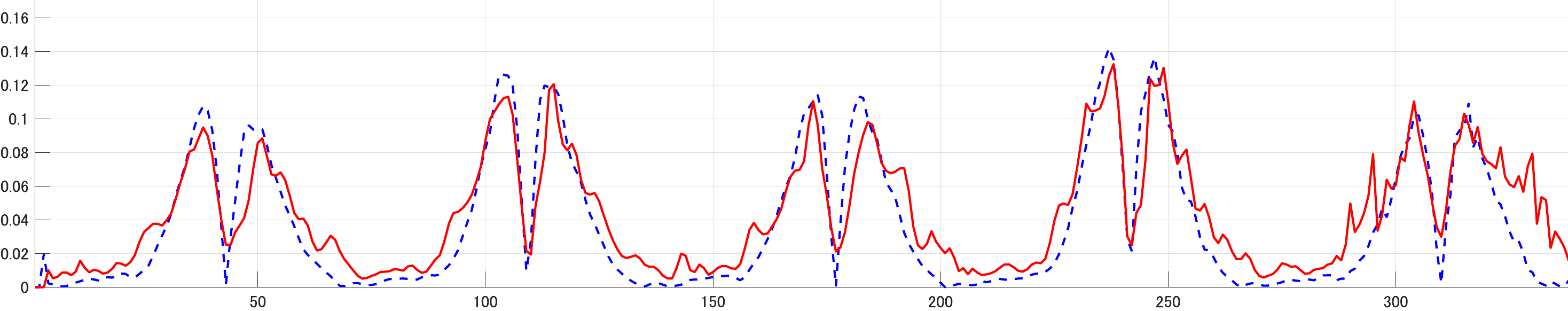}
  \caption{{\color{black}Comparison of the second-order DS output with the absolute value of derivative of the first-order DS's output.} The dashed line indicates the derivative of the first-order DS. Both output sequence vectors are normalized.}
   \label{fig:comp3Ddots-comp}
\end{figure}

\begin{figure}[tb]
\centering
 \begin{minipage}[t]{0.4\columnwidth}
  \centering
  \includegraphics[scale=0.25]{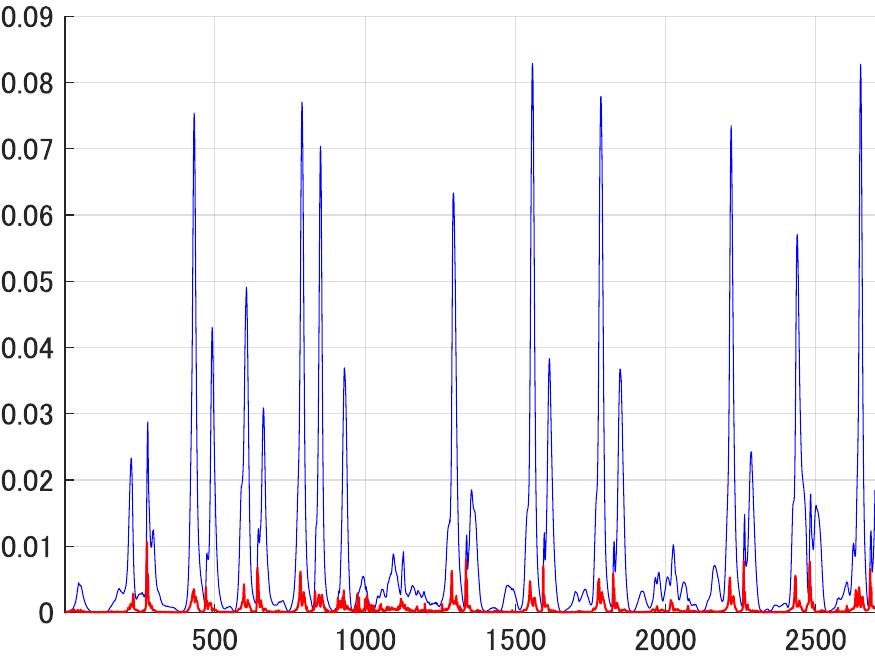}\\
  (a) First-order DS (blue) and second-order DS(red). 
\end{minipage}
\if 0
\begin{minipage}[t]{0.58\columnwidth}
  \centering
  \includegraphics[scale=0.25]{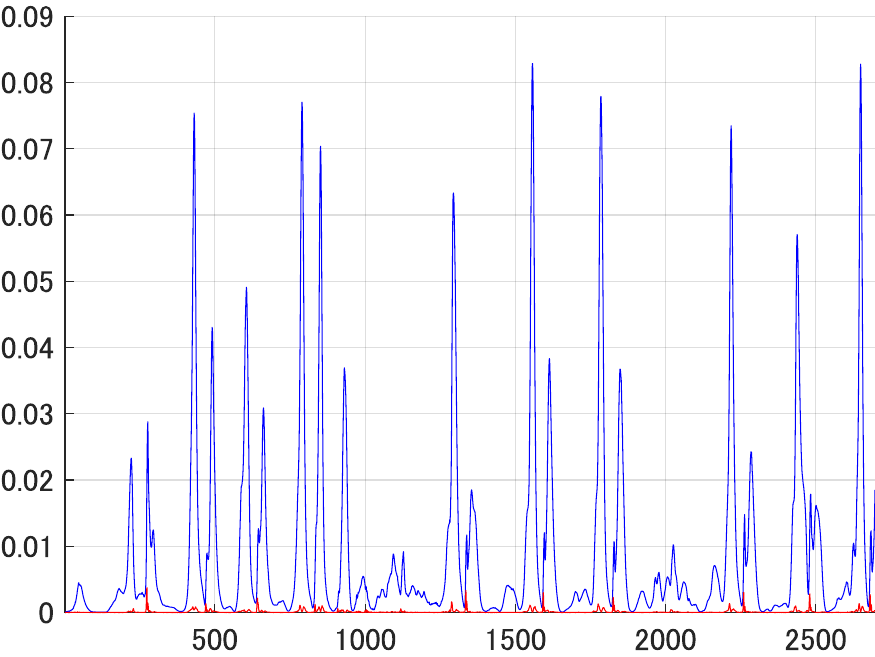}\\
  (b) Enlarged drawing of (a)
\end{minipage}
\fi
\begin{minipage}[t]{0.32\columnwidth}
  \centering
  \includegraphics[scale=0.3]{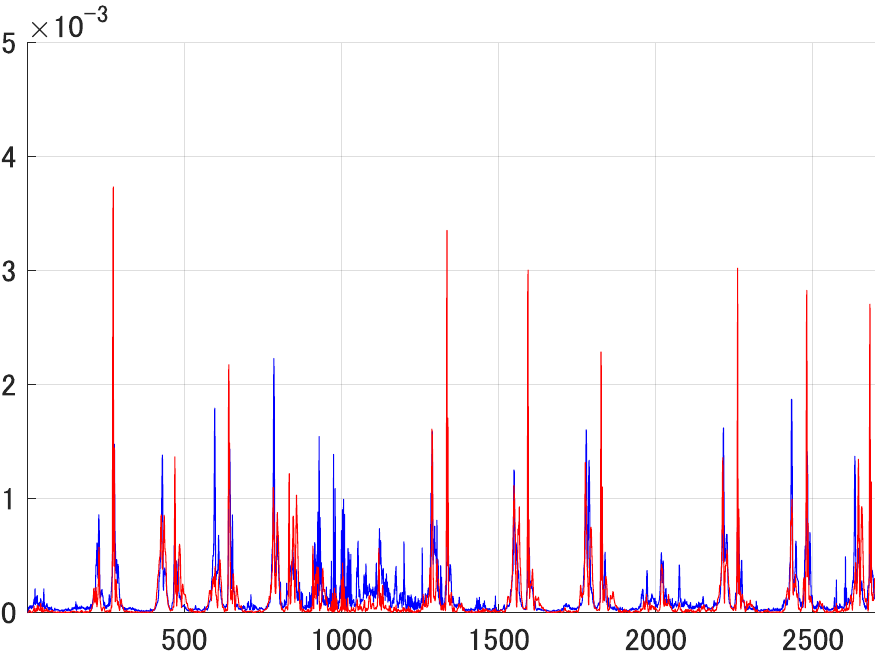}\\
  (b) The component (blue) orthogonal to the geodesic and one (red) along the geodesic. 
 \end{minipage}
 \caption{The magnitudes of first/second-order DSs on 3D dots data of jumping, turn around. }
 \label{fig:output3Ddots-jumping}
 \end{figure}

\subsection{Experimental setting}
We used two sequential sets of 3D dot data from the CMU Graphics Lab Motion Capture Database \cite{cmudatabase}. One represents "walking action". Another represents "jumping, turning around". Figs.\ref{fig:input3Ddots}(a) and (b) show example sets of 3D dot data of walking and jumping. The number of 3D dot data is 41. Thus, three-dimensional shape subspaces are in 41-dimensional vector space. The total number of walking and jumping frames are 343 and 2701, respectively. We used 3D shape subspaces that extracted every four frames, as the change between successive frames is very small.

\subsection{Experimental results}
Figure \ref{fig:output3Ddots-walking}(a) shows the magnitude of the first-order DS for walking data. We can see that it can capture the periodic variation in velocity during walking. Fig.\ref{fig:output3Ddots-walking}(b) shows the comparison between the magnitudes of the first-order DS (blue line) and the second-order DS (red line). The acceleration matches well with the change in velocity. Figs.\ref{fig:output3Ddots-walking}(c) and (d) show the two orthogonal components of the magnitude of the second-order DS. The component of ${\mathcal{D}}^{\prime}({\mathcal{S}}^{\prime}_{2}, \mathcal{M}(\mathcal{S}_1,\mathcal{S}_3))$ is dominant in the walking data, while the component of ${\mathcal{D}}({\mathcal{S}}_{2}, \mathcal{W}(\mathcal{S}_1,\mathcal{S}_3))$ is small. This difference implies that the primary acceleration occurs along the geodesic. 
It is known that a movement along a geodesic can be smooth, satisfying the energy minimum principle. In this sense, we may consider walking as a smooth action.

{\color{black}
To further confirm the naturalness of the definition of our second-order DS, we compared 
the absolute value of the derivative of the first-order DS's output with the second-order DS's. Both sequence vectors were normalized. We can see that they have almost the same variation pattern shown in Fig.\ref{fig:comp3Ddots-comp}. In fact, the normalized correlation coefficient is a high value of 0.948.}

Figure \ref{fig:output3Ddots-jumping}(a) shows the magnitude of the first-order DS for jumping data. We can see that it captures the periodic variation in the velocity of jumping. Fig.\ref{fig:output3Ddots-jumping}(b) shows the magnitudes of the first-order DS (blue line) and the second-order DS (red line). Figs.\ref{fig:output3Ddots-jumping}(c) and (d) show two orthogonal components of the second-order DS's magnitude. The two orthogonal components, ${\mathcal{D}}(\omega({\mathcal{S}}_2), \mathcal{M}(\mathcal{S}_1,\mathcal{S}_3))$ and ${\mathcal{D}}(S_{2}, \mathcal{W}(\mathcal{S}_1,\mathcal{S}_3))$ contribute equally to the whole acceleration unlike the result of walking. 
This result suggests that jumping, unlike walking, may not be regarded as a smooth action that satisfies the energy minimum principle.

\subsection{Time series analysis of biometric signal}
We demonstrate the validity of first/second-order DSs in analyzing one-channel signal \cite{mssa,gssa,tgssa} based on the framework of the singular spectrum analysis (SSA)\cite{SSAbook}. 
SSA-based method for signal analysis relies on a low dimensional subspace, called signal subspace, generated in one of the steps of SSA as shown in Fig.\ref{fig:basicidea}. The main advantage of using signal subspace is that it can represent the essential temporal structure of signal data compactly, hence largely reducing the computational cost. 

The process flow of the SSA-based method consists of the following four steps, as shown in Fig.\ref{fig:basicidea}. First, the entire time series data is divided into two parts: past and present time series. Second, two signal subspaces are generated by applying the SSA to the past and present time-series data. Next, the magnitude of the first /second-order DS between the present and past subspaces is measured as the degree of anomaly change. Finally, a specific anomaly change is detected when the anomaly score is larger than a given threshold value. 

\noindent{\bf{Generation of signal subspace}}\\
\noindent{The signal subspace} ${\mathcal{S}}_t$ corresponding to a time-series data $h(t)$ is generated by analyzing the trajectory matrix calculated from the time-series data in the process of the SSA as shown in Fig.~\ref{fig:basicidea}. Given one-dimensional time series data $h(t)$, the corresponding trajectory matrix ${\mathbf{H}}_t \in \mathbb{R}^{w{\times}M}$ is defined as follows:
\begin{equation}
\label{eq:hankel}
   {\mathbf{H}}_t=\left[\begin{array}{cccc}
 h(t-w-M+2) & \cdots & h(t-w+1) \\
\vdots & \ddots & \vdots \\
 h(t-M+1) & \cdots & h(t)
 \end{array}\right],
\end{equation}
where $w$ is the width of a sliding window and \textit{M} is the number of the sliding windows as shown in Fig.~\ref{fig:basicidea}. 

To obtain the principal components of $h(t)$, SSA solves the eigenvalue problem of ${\mathbf{H}}_{t} {\mathbf{H}}_{t}^{\top} {\mathbf{\Phi}}={\mathbf{\Phi \Sigma}}$, where $\mathbf{\Phi}$ is the matrix arranging eigenvectors in columns and $\mathbf\Sigma$ is the matrix containing eigenvalues, $\lambda_1, \dots, \lambda_w$, in the diagonal elements.
The $d_s$-dimensional signal subspace ${\mathcal{S}}_{t}$ of the input time series data $h(t)$ is spanned by the eigenvectors $\{{\boldsymbol{\phi}}_i\}_{i=1}^{d_s}$ corresponding to the $d_s$ largest eigenvalues in $w$-dimensional vector space.

\begin{figure}[tb]
\begin{minipage}[t]{1.0\columnwidth}   
\centering
\includegraphics[scale=0.5]{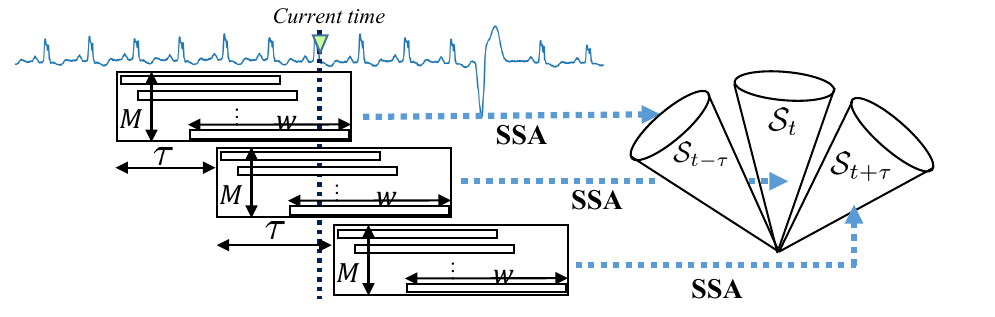}
\end{minipage}
\caption{A framework for analyzing signal using SSA. It measures the temporal variations of the first-order DS between $\mathcal{S}_{t-\tau}$ and $\mathcal{S}_{t+\tau}$ and the second-order DS of $\mathcal{S}_{t-\tau}$, $\mathcal{S}_t$ and $\mathcal{S}_{t+\tau}$.} 
\label{fig:basicidea}
\end{figure}


\begin{figure}[tb]
 \begin{minipage}[t]{0.3\linewidth}
  \centering
  \includegraphics[scale=0.2]{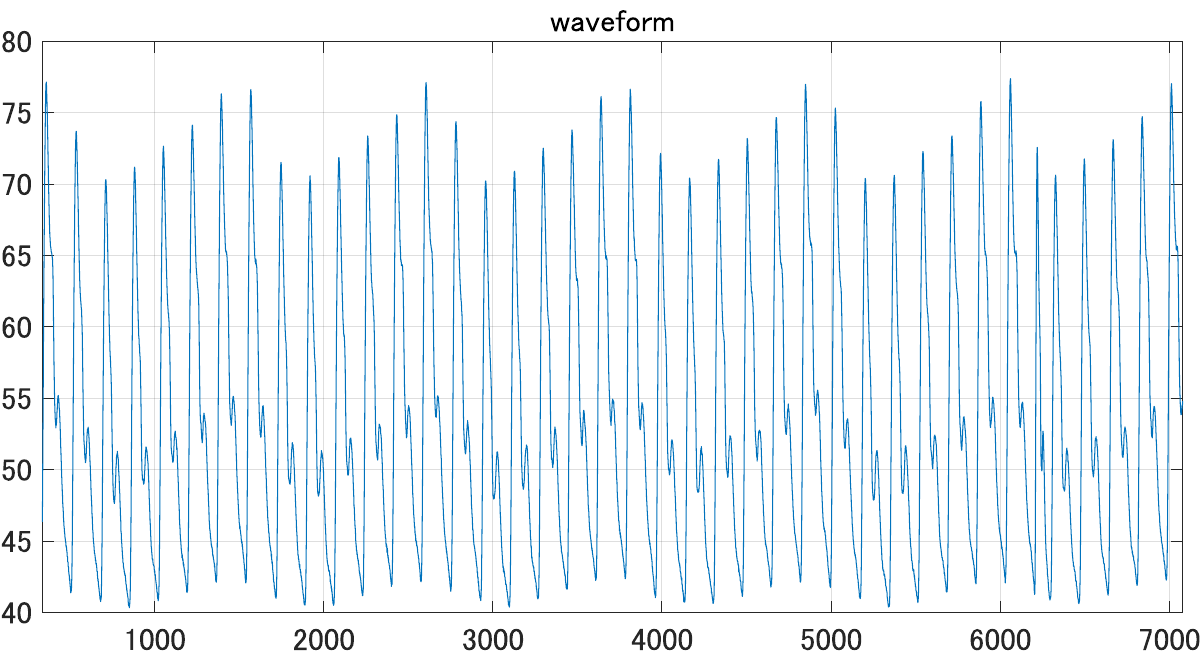}\\
  (a) Input sequential signal
  \end{minipage} \hspace{3mm}
 \begin{minipage}[t]{0.3\linewidth}
  \centering
  \includegraphics[scale=0.2]{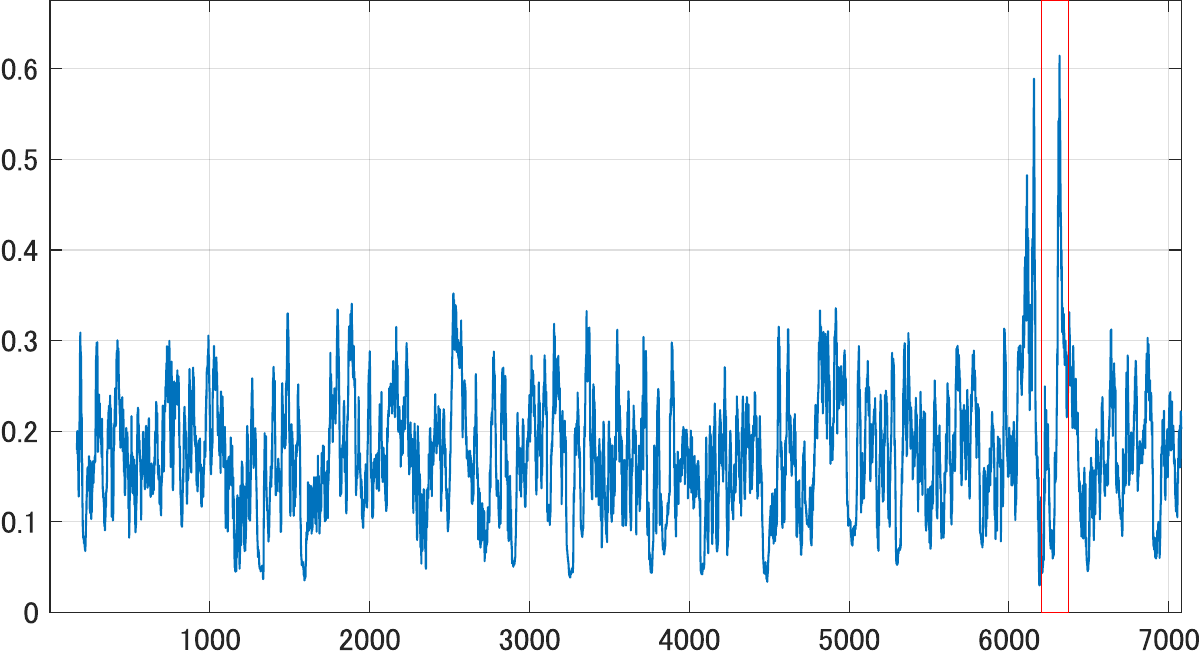}\\
  (b) $\Mag({\mathcal{D}}(\mathcal{S}_{t-\tau},\mathcal{S}_{t+\tau}))$
\end{minipage} \hspace{3mm}
 \begin{minipage}[t]{0.33\linewidth}
 \centering
  \includegraphics[scale=0.2]{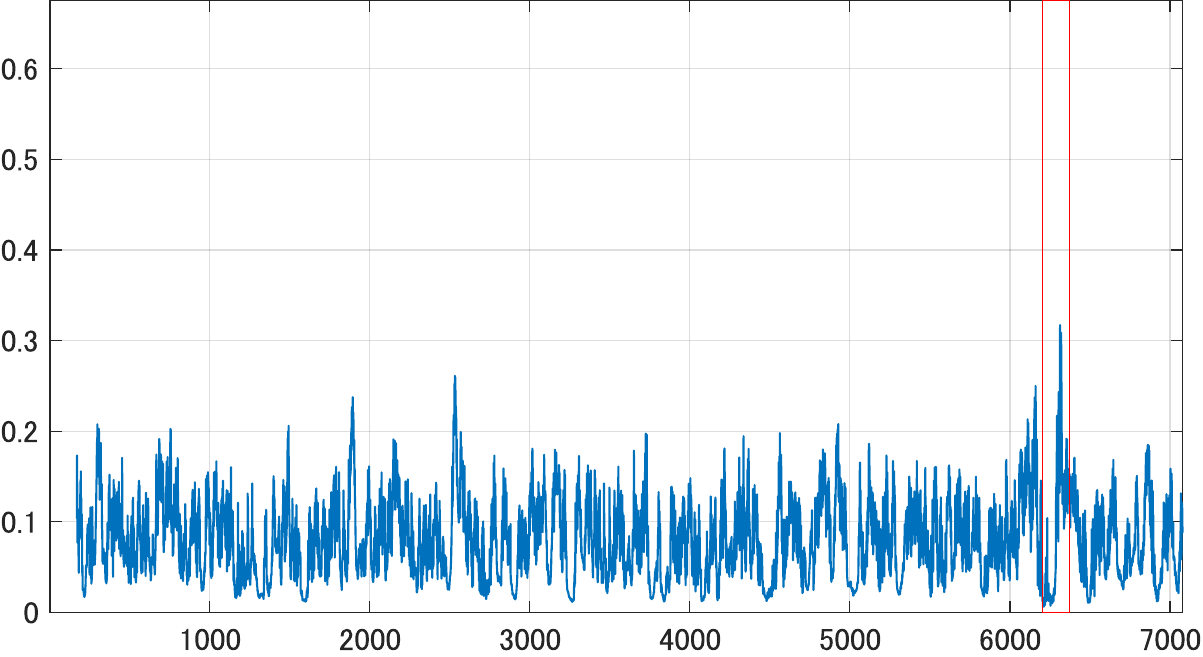}\\
  (c) $\Mag({\mathcal{D}}(\mathcal{S}_{t},\mathcal{M}(\mathcal{S}_{t-\tau},\mathcal{S}_{t+\tau}))$
\end{minipage}
\caption{The input signal and its magnitudes of first/second-order DSs.}
\label{fig:result1}
\end{figure}

\subsection{Experimental setting}
For this experiment, we used the following data from the UCR Time Series Data Mining Archive \cite{Eamonn2005-ea}: 141\_UCR\_Anomaly\_InternalBleeding5\_4000\_6200\_6370, which is shown in Fig.\ref{fig:result1}(a). The parameters, M and w, were set to 220 and 100, respectively. The dimension of signal subspaces was 40. Time lag $\tau$ was set to 16 and $\delta$ in Sec.\ref{sec:general case} was set to 1e-4. 

\subsection{Experimental results}
Figs.\ref{fig:result1}(b) and (c) show the magnitudes of the first/second-order DSs. We can see that both the first and second-order DSs respond to the abnormal term (red rectangle) as expected. Figs.\ref{fig:result1}(d) and (e) show $\Mag({\mathcal{D}}(\mathcal{S}_t,\mathcal{W}(\mathcal{S}_{t-\tau},\mathcal{S}_{t+\tau}))$ and $\Mag({\mathcal{D}}({\mathcal{\hat{S}}}_t,\mathcal{M}(\mathcal{S}_{t-\tau},\mathcal{S}_{t+\tau}))$, respectively. The main acceleration occurs along the geodesic, while the acceleration in the direction orthogonal to the geodesic is small, as in the first experiment using walking data. Therefore, we can mention that this biometrical signal (cardiac electrogram) changes smoothly, satisfying the energy minimum principle.


\if 0
\section{Limitations}
\label{sec:limitations}
A first-order DS has a symmetrical structure: ${\mathcal{D}}({\mathcal{S}}_1, {\mathcal{S}}_3)$=${\mathcal{D}}({\mathcal{S}}_3, {\mathcal{S}}_1)$. Thus, we cannot distinguish them in our framework. Similarly, ${\mathcal{D}}({\mathcal{S}}_1, {\mathcal{S}}_2, {\mathcal{S}}_3)$ is equal to ${\mathcal{D}}({\mathcal{S}}_3, {\mathcal{S}}_2, {\mathcal{S}}_1)$, as we defined the second-order DS based on the first-order DS. We should consider this property carefully when applying our second-order DS to applications. 
We can calculate a second-order DS in most cases. However, when focusing on the property in terms of derivative, we need to assume that the related subspaces are comparatively close to each other, as the definition of the second-order DS is motivated by the idea of the central difference method under the condition that the associated vectors are close to each other.
\fi

\section{Conclusions}
\label{sec:conclusions}
We proposed the novel concept of second-order difference subspace as an extension of first-order difference subspace.
As preliminary to its definition, we defined the first-order difference subspace in a more general setting in which there is an intersection subspace between two subspaces with different dimensions. Given three sequential subspaces, we defined second-order difference subspace as the difference subspace between the principal component subspace between the first and third subspaces and the middle subspace. The numerical experiments on 3D shape analysis using 3D shape subspace and time series analysis of biomedical signals clearly demonstrated the validity and naturalness of the definition of the second-order DS.


\section*{Acknowledgments}
We would like to thank Eamonn Keogh and all the other people who have contributed to the UCR Time Series Data Mining Archive. We would like to thank the CMU Graphics Lab Motion Capture Database. The data used in this research was obtained from mocap.cs.cmu.edu. The database was created with funding from NSF EIA-0196217.

\newpage
\bibliographystyle{unsrt} 
\bibliography{neurips_2024}

\newpage
\appendix

\section{Appendix}
\begin{lemma}
\label{lemma:new orthonormal basis}
The orthonormal basis ${\mathbf{\Phi}} \in {\mathbb{R}}^{{n}\times{d_1}}$ of ${\mathcal{S}_1}$ and $\mathbf{\Psi} \in {\mathbb{R}}^{{n}\times{d_2}}$ of ${\mathcal{S}_2}$ ($d_1 \leq d_2$) can be orthogonally transformed to the following orthonormal bases ${\mathbf{\Phi}}^{*} \in {\mathbb{R}}^{{n}\times{d_1}}$ and ${\mathbf{\Psi}}^{*} \in {\mathbb{R}}^{{n}\times{d_2}}$: 
\begin{align}
\mathbf{\Phi}^{*} &=[ {\boldsymbol{\gamma}}_1, \dots, {\boldsymbol{\gamma}}_r, {\mathbf{u}}_{r+1},  \dots, {\mathbf{u}}_{d_1} ], \\
\mathbf{\Psi}^{*} &= [ {\boldsymbol{\gamma}}_1, \dots, {\boldsymbol{\gamma}}_r,  {\mathbf{v}}_{r+1},  \dots, {\mathbf{v}}_{d_1}, {\boldsymbol{\beta}}_{d_1+1}, \dots, {\boldsymbol{\beta}}_{d_2} ],
\end{align} 
where $\{{\boldsymbol{\gamma}}_i\}$ represents the orthonormal basis of an  intersection subspace $\mathcal{I}$ between ${\mathcal{S}_1}$ and ${\mathcal{S}_2}$, $\{{\mathbf{u}}_i\}$ and $\{{\mathbf{v}}_i\}$ are a pair of the canonical vectors forming the $i$th nonzero canonical angle $\theta_i$ between 
${\mathcal{S}_1}$ and ${\mathcal{S}_2}$.
$\{{\boldsymbol{\beta}}_{d_1+1}, \dots, {\boldsymbol{\beta}}_{d_2}\}$ are orthogonal to the remaining orthonormal basis,  $\{{\boldsymbol{\gamma}}_i\}$, $\{{\mathbf{u}}_i\}$ and $\{{\mathbf{v}}_i\}$.
\end{lemma}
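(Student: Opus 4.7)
The plan is to construct the desired bases via the singular value decomposition of the cross-Gram matrix $\mathbf{\Phi}^{\top}\mathbf{\Psi}$, which is the standard tool for canonical (principal) angles between subspaces. Concretely, I write $\mathbf{\Phi}^{\top}\mathbf{\Psi} = \mathbf{U}\mathbf{\Sigma}\mathbf{V}^{\top}$ with $\mathbf{U}\in\mathbb{R}^{d_1\times d_1}$ and $\mathbf{V}\in\mathbb{R}^{d_2\times d_2}$ orthogonal and $\mathbf{\Sigma}\in\mathbb{R}^{d_1\times d_2}$ carrying the singular values $\sigma_1,\ldots,\sigma_{d_1}$ on its main diagonal. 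I then define the transformed bases $\mathbf{\Phi}^{*}:=\mathbf{\Phi}\mathbf{U}$ and $\mathbf{\Psi}^{*}:=\mathbf{\Psi}\mathbf{V}$; these remain orthonormal bases of $\mathcal{S}_1$ and $\mathcal{S}_2$ because $\mathbf{U}$ and $\mathbf{V}$ are orthogonal, so the span is preserved. The key identity is $(\mathbf{\Phi}^{*})^{\top}\mathbf{\Psi}^{*} = \mathbf{\Sigma}$, which says that if we denote the columns of $\mathbf{\Phi}^{*}$ by $\mathbf{u}_i$ and those of $\mathbf{\Psi}^{*}$ by $\mathbf{v}_j$, then $\langle \mathbf{u}_i,\mathbf{v}_j\rangle = \sigma_i \delta_{ij}$ for $i\leq d_1$, $j\leq d_2$, recovering $\cos\theta_i = \sigma_i$.

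Next I reorder columns (by a further permutation inside $\mathbf{U}$ and $\mathbf{V}$) so that the singular values appear as $\sigma_1=\cdots=\sigma_r=1 > \sigma_{r+1}\geq\cdots\geq\sigma_{d_1}\geq 0$. For each index $i\leq r$, the equality $\langle \mathbf{u}_i,\mathbf{v}_i\rangle = 1$ together with $\|\mathbf{u}_i\|=\|\mathbf{v}_i\|=1$ forces $\mathbf{u}_i=\mathbf{v}_i$ (Cauchy--Schwarz equality case); I name this common vector $\boldsymbol{\gamma}_i$. Because the $\boldsymbol{\gamma}_i$ lie in both $\mathcal{S}_1$ and $\mathcal{S}_2$ and are mutually orthonormal, they form an orthonormal set in $\mathcal{S}_1\cap\mathcal{S}_2$. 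The fact that they actually \emph{span} the intersection (so $r = \dim(\mathcal{S}_1\cap\mathcal{S}_2)$) is the standard statement that the multiplicity of the singular value $1$ of $\mathbf{\Phi}^{\top}\mathbf{\Psi}$ equals $\dim(\mathcal{S}_1\cap\mathcal{S}_2)$: any unit vector $\mathbf{w}\in\mathcal{S}_1\cap\mathcal{S}_2$ has coordinates $\mathbf{a}=\mathbf{\Phi}^{\top}\mathbf{w}$ and $\mathbf{b}=\mathbf{\Psi}^{\top}\mathbf{w}$ of unit norm satisfying $\mathbf{a}^{\top}\mathbf{\Phi}^{\top}\mathbf{\Psi}\mathbf{b}=1$, which via the SVD implies $\mathbf{w}$ lies in the span of the singular-value-$1$ vectors.

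For the non-zero-angle block, i.e.\ indices $r+1\leq i\leq d_1$, I simply relabel $\mathbf{u}_i,\mathbf{v}_i$ as the canonical vectors associated with $\theta_i=\arccos\sigma_i\in(0,\pi/2]$. Finally, for the extra indices $d_1+1\leq j\leq d_2$, the relation $(\mathbf{\Phi}^{*})^{\top}\mathbf{\Psi}^{*}=\mathbf{\Sigma}$ together with the shape of $\mathbf{\Sigma}$ (zero columns beyond position $d_1$) gives $\langle \mathbf{u}_i,\mathbf{v}_j\rangle = 0$ for every $i\leq d_1$; hence the columns $\mathbf{v}_j$, which I rename $\boldsymbol{\beta}_j$, are orthogonal to every $\mathbf{u}_i$ (and in particular to every $\boldsymbol{\gamma}_k$). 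Orthonormality among the $\boldsymbol{\beta}_j$'s and orthogonality to the $\{\mathbf{v}_i\}_{i\leq d_1}$ follow directly from $\mathbf{\Psi}^{*}$ being orthonormal. Collecting these three blocks yields the claimed form of $\mathbf{\Phi}^{*}$ and $\mathbf{\Psi}^{*}$.

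The main obstacle I anticipate is the bookkeeping around the multiplicity statement $r = \#\{i:\sigma_i=1\}$, since it is easy to assert but must be carefully argued from the SVD; once that is cleanly in hand, the rest is essentially relabeling and a straightforward appeal to the Cauchy--Schwarz equality case. The other minor subtlety is that $\mathbf{V}$ has size $d_2\times d_2$ while $\mathbf{U}$ is only $d_1\times d_1$, so one must be careful that the last $d_2-d_1$ columns of $\mathbf{V}$ (and hence of $\mathbf{\Psi}^{*}$) are left free by the SVD up to an orthogonal transformation within the kernel of $\mathbf{\Psi}^{\top}\mathbf{\Phi}$; this freedom is exactly what produces the $\boldsymbol{\beta}_j$'s and does not affect the claimed orthogonality relations.
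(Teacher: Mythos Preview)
Your proof is correct and takes a genuinely different route from the paper's. The paper first \emph{assumes} an orthonormal basis $\boldsymbol{\Gamma}$ for the intersection $\mathcal{I}=\mathcal{S}_1\cap\mathcal{S}_2$ is already in hand, writes $\mathbf{\Phi}=[\boldsymbol{\Gamma}\ \mathbf{A}_1]$ and $\mathbf{\Psi}=[\boldsymbol{\Gamma}\ \mathbf{A}_2\ \mathbf{A}_3]$, and then applies the SVD only to the reduced block $\mathbf{A}_1^{\top}[\mathbf{A}_2\ \mathbf{A}_3]$; the canonical vectors and the residual block $\mathbf{Z}$ are then read off from that smaller SVD, and several orthogonality relations (e.g.\ $\mathbf{B}_1^{\top}\mathbf{Z}=\mathbf{0}$) are verified one by one. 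You instead run a single SVD on the full $\mathbf{\Phi}^{\top}\mathbf{\Psi}$ and recover all three blocks simultaneously: the intersection emerges as the eigenspace for singular value $1$ via the Cauchy--Schwarz equality case, and the orthogonality of the $\boldsymbol{\beta}_j$ to every $\mathbf{u}_i$ drops out immediately from the zero columns of $\mathbf{\Sigma}$ rather than requiring separate verification. Your approach is more self-contained (it actually \emph{computes} the intersection basis rather than presupposing it) and shorter; the paper's approach avoids having to justify that the multiplicity of the singular value $1$ equals $\dim(\mathcal{S}_1\cap\mathcal{S}_2)$, which is the one step you correctly flag as needing care, though your sketch of that argument is sound.
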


\begin{proof}
We can orthogonally decompose $\mathcal{S}_1$  into two subspaces: the intersection $\mathcal{I}$ with $\mathcal{S}_2$  and non-intersection $\mathcal{A}_1$ as $\mathcal{S}_1 = \mathcal{I} \oplus \mathcal{A}_1$, where  $\mathcal{I} = \Span (\boldsymbol{\Gamma})$  and ${\boldsymbol{\Gamma}} = [ {\boldsymbol{\gamma}}_1, \dots, {\boldsymbol{\gamma}}_r ]$. 
Thus, an orthonormal basis of $\mathcal{S}_1$ can be written in the form of a block matrix $\mathbf{\Phi} = [\mathbf{\Gamma} \quad \mathbf{A}_1] $ where $\mathbf{A}_1 \in {\mathbb{R}}^{n\times(d_1-r)}$ is the orthonormal basis of subspace ${\mathcal{A}}_1$.

Analogously, we can orthogonally decompose  $\mathcal{S}_2$  into three subspaces
as $\mathcal{S}_2 = \mathcal{I} \oplus \mathcal{A}_2 \oplus \mathcal{A}_3$ where  $\mathcal{A}_1$ and $\mathcal{A}_2$ have the same dimension of $d_1-r$. Hence,  an orthonormal basis of $\mathcal{S}_2$ can be written in the form of a block matrix $\mathbf{\Psi} = [\mathbf{\Gamma} \quad \mathbf{A}_2 \quad \mathbf{A}_3] $ where $\mathbf{A}_2 \in {\mathbb{R}}^{n\times(d_1-r)}$ and $\mathbf{A}_3 \in {\mathbb{R}}^{n\times(d_2-d_1)}$.

We can focus on only the relationship among $\mathcal{A}_1, \mathcal{A}_2$ and $\mathcal{A}_3$, since these subspaces are orthogonal to $\mathcal{I}$. 
Then, we apply SVD to ${\mathbf{A}}_1^{\top} [\mathbf{A}_2 \quad \mathbf{A}_3]$:
    \begin{align}
    \label{svd3}
        {\mathbf{A}}_1^{\top}  [\mathbf{A}_2 \quad \mathbf{A}_3] =
        \mathbf{U}
        \begin{bmatrix}
            \mathbf{\Sigma} & \bf{0}
        \end{bmatrix}
        \begin{bmatrix}
            {\mathbf{V}} & {\mathbf{V}}_{\perp}
        \end{bmatrix}^{\top},
    \end{align}
where ${\mathbf{U}} \in {\mathbb{R}}^{(d_1-r)\times(d_1-r)}$,
${\mathbf{\Sigma}} \in {\mathbb{R}}^{(d_1-r)\times(d_1-r)}$,
${\mathbf{0}} \in {\mathbb{R}}^{(d_1-r)\times(d_2-d_1)}$,
${\mathbf{V}} \in {\mathbb{R}}^{(d_2-r)\times(d_1-r)}$  and  
${\mathbf{V}}_{\perp} \in {\mathbb{R}}^{(d_2-r)\times(d_2-d_1)}$.
Note that ${\mathbf{U}}^{\top} {\mathbf{U}} ={\mathbf{V}}^{\top} {\mathbf{V}} ={\mathbf{V}}_{\perp}^{\top} {\mathbf{V}}_{\perp} =\mathbf{I}$ and ${\mathbf{V}}^{\top} {\mathbf{V}}_{\perp} =\mathbf{0}$ .

Then, we generate new orthonormal basis as follows:
    \begin{align}
    \mathbf{B}_1&=\mathbf{A}_1 \mathbf{U},\\
    \mathbf{B}_2&=[\mathbf{A}_2 \quad \mathbf{A}_3]\mathbf{V},\\
    \mathbf{Z}&=[\mathbf{A}_2 \quad \mathbf{A}_3] {\mathbf{V}}_{\perp},   
    \end{align}
where  ${\mathbf{B}}_1 \in {\mathbb{R}}^{n\times(d_1-r)}$ and ${\mathbf{B}}_2  \in {\mathbb{R}}^{n\times(d_1-r)}$  are  the pairs of the canonical vectors forming the canonical angles $\{\theta_i\}_{i=r+1}^{d_1}$, as   
    ${\mathbf{B}}_1^{\top} {\mathbf{B}}_2=\mathbf{\Sigma}=\diag(\mathbf{\Theta})$.
    The size of ${\mathbf{Z}}$ is ${n}$ by $d_2-d_1$.

It follows from ${\mathbf{\Gamma}}^{\top} \mathbf{A}_1 = {\mathbf{\Gamma}}^{\top} \mathbf{A}_2 = {\mathbf{\Gamma}}^{\top} \mathbf{A}_3=\bf{0}$ that ${\mathbf{\Gamma}}^{\top} {\mathbf{B}}_1 
={\mathbf{\Gamma}}^{\top} {\mathbf{B}}_2 
={\mathbf{\Gamma}}^{\top} {\mathbf{Z}}_1 =\bf{0}$.

Further, we show that ${\mathbf{B}}_2^{\top} \mathbf{Z} = \bf{0}$ as follows:   
\begin{align} 
    {\mathbf{B}}_2^{\top} \mathbf{Z} &= {\mathbf{V}}^{\top} 
    \begin{bmatrix}
    {\mathbf{A}}_2^{\top} \\
    {\mathbf{A}}_3^{\top} 
    \end{bmatrix}
    \begin{bmatrix}
    {\mathbf{A}}_2 &   {\mathbf{A}_3} 
    \end{bmatrix}
    {\mathbf{V}}_{\perp},\\     
    &=   {\mathbf{V}}^{\top} 
    \begin{bmatrix}
    {\mathbf{I}}  & {\bf{0}}\\
    {\bf{0}} & {\mathbf{I}} 
    \end{bmatrix}
    {\mathbf{V}}_{\perp},\\
    &= {\mathbf{V}}^{\top} {\mathbf{V}}_{\perp},\\
    &=\bf{0}.  
\end{align}

Consequently, we obtain the orthonormal basis  ${\mathbf{\Phi}}^{*}$ and 
${\mathbf{\Psi}}^{*}$ as follows:
\begin{align}
\mathbf{\Phi}^{*} &=[ {\boldsymbol{\Gamma}} \quad \mathbf{B}_1], \\
\mathbf{\Psi}^{*} &= [ {\boldsymbol{\Gamma}} \quad \mathbf{B}_2 \quad \mathbf{Z}],
\end{align} 
where ${\mathbf{B}}_1 = [ {\mathbf{u}}_{r+1},  \dots, {\mathbf{u}}_{d_1}]$, 
${\mathbf{B}}_2 = [ {\mathbf{v}}_{r+1},  \dots, {\mathbf{v}}_{d_1}]$ and 
${\mathbf{Z}} = [ {\boldsymbol{\beta}}_{d_1+1},  \dots, {\boldsymbol{\beta}}_{d_2}]$. 

Next, we show that ${\mathbf{B}}_1^{\top} \mathbf{Z} = \bf{0}$ as follows:

From Eq.(\ref{svd3}), we obtain
\begin{align}    
    \mathbf{U}^{\top}{\mathbf{A}}_1^{\top}  [\mathbf{A}_2 \quad \mathbf{A}_3]            
     \begin{bmatrix}
            {\mathbf{V}} & {\mathbf{V}}_{\perp}
     \end{bmatrix}=
     \begin{bmatrix}
            \mathbf{\Sigma} & \bf{0}
      \end{bmatrix} . 
\end{align}     
Hence,
\begin{align}    
    \mathbf{U}^{\top}{\mathbf{A}}_1^{\top}  [\mathbf{A}_2 \quad \mathbf{A}_3]            
   {\mathbf{V}}_{\perp} =  \bf{0}.
\end{align}  
$ {\mathbf{B}}_1^{\top} \mathbf{Z}$ is written as follows:
\begin{align} 
    {\mathbf{B}}_1^{\top} \mathbf{Z} = {\mathbf{U}}^{\top} 
    {\mathbf{A}}_1^{\top} 
    \begin{bmatrix}
    {\mathbf{A}}_2 &   {\mathbf{A}_3} 
    \end{bmatrix}
    {\mathbf{V}}_{\perp}.\\     
\end{align}
Therefore, 
\begin{align}    
{\mathbf{B}}_1^{\top} \mathbf{Z}= \mathbf{U}^{\top}{\mathbf{A}}_1^{\top}  [\mathbf{A}_2 \quad \mathbf{A}_3] {\mathbf{V}}_{\perp} =  \bf{0}. 
\end{align}

Finally, we conclude that ${\mathbf{\Gamma}} {\mathbf{Z}}^{\top}
={\mathbf{B}}_1 {\mathbf{Z}}^{\top}
={\mathbf{B}}_2 {\mathbf{Z}}^{\top}
=\bf{0}$, 
implying that the three subspaces, corresponding to ${\mathbf{B}}_1$, ${\mathbf{B}}_2$ and ${\mathbf{Z}}$, are orthogonal to each other.
\end{proof}

\vspace{3mm}
\begin{lemma}
\label{lemma:eigenvalues of G}
Given two subspaces $\mathcal{S}_1$ and  $\mathcal{S}_2$ such that they have an intersection. Let ${\mathbf{\Phi}}^{*}$ and ${\mathbf{\Psi}}^{*}$ be the orthonormal basis for each subspace, as defined in the previous lemma, and let 
$\mathbf{P}_1$=${\mathbf{\Phi}}{{\mathbf{\Phi}}}^{\top}$=${{\mathbf{\Phi}}^{*}}{{\mathbf{\Phi}}^{*}}^{\top}$ and 
$\mathbf{P}_2$=${\mathbf{\Psi}}{{\mathbf{\Psi}}}^{\top}$=${{\mathbf{\Psi}}^{*}}{{\mathbf{\Psi}}^{*}}^{\top}$ 
be the orthogonal projection matrix onto each subspace. 
Let the sum matrix $\mathbf{G}=\mathbf{P}_1+\mathbf{P}_2$. 
The principal component subspace $\mathcal{M}$ is spanned by the eigenvectors of $\mathbf{G}$ corresponding to the $d_1$ larger eigenvalues  than 1.
The difference subspace $\mathcal{D}$ is spanned by the eigenvectors of $\mathbf{G}$ corresponding to $d_1 - r$ non-negative eigenvalues smaller than 1.    
 \end{lemma}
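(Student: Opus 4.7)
The plan is to exploit the structural decomposition already furnished by Lemma \ref{lemma:new orthonormal basis}: once the bases ${\mathbf{\Phi}}^{*}=[{\boldsymbol{\Gamma}}\ {\mathbf{B}}_1]$ and ${\mathbf{\Psi}}^{*}=[{\boldsymbol{\Gamma}}\ {\mathbf{B}}_2\ {\mathbf{Z}}]$ are in hand, the projection matrices split cleanly as
\begin{align}
\mathbf{P}_1 &= {\boldsymbol{\Gamma}}{\boldsymbol{\Gamma}}^{\top} + {\mathbf{B}}_1 {\mathbf{B}}_1^{\top}, \\
\mathbf{P}_2 &= {\boldsymbol{\Gamma}}{\boldsymbol{\Gamma}}^{\top} + {\mathbf{B}}_2 {\mathbf{B}}_2^{\top} + {\mathbf{Z}}{\mathbf{Z}}^{\top},
\end{align}
so that $\mathbf{G}=2{\boldsymbol{\Gamma}}{\boldsymbol{\Gamma}}^{\top}+{\mathbf{B}}_1{\mathbf{B}}_1^{\top}+{\mathbf{B}}_2{\mathbf{B}}_2^{\top}+{\mathbf{Z}}{\mathbf{Z}}^{\top}$. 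I will use the mutual orthogonality relations proven in the previous lemma (${\boldsymbol{\Gamma}}^{\top}{\mathbf{B}}_1={\boldsymbol{\Gamma}}^{\top}{\mathbf{B}}_2={\boldsymbol{\Gamma}}^{\top}{\mathbf{Z}}={\mathbf{B}}_1^{\top}{\mathbf{Z}}={\mathbf{B}}_2^{\top}{\mathbf{Z}}=\mathbf{0}$) together with ${\mathbf{B}}_1^{\top}{\mathbf{B}}_2={\mathbf{\Sigma}}=\diag(\cos\theta_{r+1},\dots,\cos\theta_{d_1})$ to diagonalize $\mathbf{G}$ explicitly on a convenient basis.

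First I would inspect the easy invariant directions: each intersection basis vector ${\boldsymbol{\gamma}}_i$ satisfies $\mathbf{G}{\boldsymbol{\gamma}}_i=2{\boldsymbol{\gamma}}_i$, and each ${\boldsymbol{\beta}}_j$ from $\mathbf{Z}$ satisfies $\mathbf{G}{\boldsymbol{\beta}}_j={\boldsymbol{\beta}}_j$, contributing $r$ eigenvalues equal to $2$ and $d_2-d_1$ eigenvalues equal to $1$. Next I would analyze the action on the canonical-vector pairs: since $\mathbf{u}_i$ is orthogonal to ${\boldsymbol{\Gamma}}$ and $\mathbf{Z}$, one gets $\mathbf{P}_2 \mathbf{u}_i = \cos\theta_i\,\mathbf{v}_i$ and symmetrically $\mathbf{P}_1 \mathbf{v}_i=\cos\theta_i\,\mathbf{u}_i$. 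Hence $\mathbf{G}$ stabilizes each two-dimensional subspace $\Span(\mathbf{u}_i,\mathbf{v}_i)$, where it acts as $\begin{pmatrix}1 & \cos\theta_i\\ \cos\theta_i & 1\end{pmatrix}$, whose eigenvectors are $\mathbf{u}_i\pm\mathbf{v}_i$ with eigenvalues $1\pm\cos\theta_i$.

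Collecting these results gives a complete eigendecomposition of $\mathbf{G}$ restricted to the sum subspace $\mathcal{W}$: eigenvalue $2$ with eigenvectors ${\boldsymbol{\gamma}}_i$ ($i=1,\dots,r$); eigenvalue $1+\cos\theta_i\in(1,2)$ with eigenvectors proportional to $\mathbf{u}_i+\mathbf{v}_i$ ($i=r+1,\dots,d_1$); eigenvalue $1-\cos\theta_i\in(0,1)$ with eigenvectors proportional to $\mathbf{u}_i-\mathbf{v}_i$; and eigenvalue $1$ with eigenvectors ${\boldsymbol{\beta}}_j$. Since the nonzero canonical angles satisfy $\cos\theta_i<1$, all $1+\cos\theta_i$ are strictly greater than $1$ and all $1-\cos\theta_i$ lie strictly in $(0,1)$. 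Comparing with the geometrical definitions, the $d_1$ vectors $\{{\boldsymbol{\gamma}}_i\}\cup\{(\mathbf{u}_i+\mathbf{v}_i)/\|\mathbf{u}_i+\mathbf{v}_i\|\}$ span $\mathcal{M}$ (which contains $\mathcal{I}$), and the $d_1-r$ vectors $\{(\mathbf{u}_i-\mathbf{v}_i)/\|\mathbf{u}_i-\mathbf{v}_i\|\}$ span $\mathcal{D}$, establishing both claims.

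The main obstacle, as I see it, is not a conceptual one but a bookkeeping issue: ensuring that eigenvectors from different invariant blocks are genuinely orthogonal (so that the eigenspaces decouple as asserted) and that nothing is lost outside $\mathcal{W}$. The orthogonality inside $\mathcal{W}$ follows automatically from the pairwise orthogonality relations of $\{{\boldsymbol{\gamma}}_i\},\{\mathbf{u}_i\pm\mathbf{v}_i\},\{{\boldsymbol{\beta}}_j\}$ established above, while on the orthogonal complement $\mathcal{W}^{\perp}$ both $\mathbf{P}_1$ and $\mathbf{P}_2$ act as zero, contributing only the trivial eigenvalue $0$, so the count $r+(d_1-r)+(d_1-r)+(d_2-d_1)=d_1+d_2-r=\dim\mathcal{W}$ closes out.
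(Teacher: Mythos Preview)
Your proof is correct and follows essentially the same route as the paper: both decompose $\mathbf{G}=2{\boldsymbol{\Gamma}}{\boldsymbol{\Gamma}}^{\top}+({\mathbf{B}}_1{\mathbf{B}}_1^{\top}+{\mathbf{B}}_2{\mathbf{B}}_2^{\top})+{\mathbf{Z}}{\mathbf{Z}}^{\top}$ into three mutually orthogonal blocks and read off the eigenstructure block by block. The only difference is that where the paper treats the middle block ${\mathbf{J}}={\mathbf{B}}_1{\mathbf{B}}_1^{\top}+{\mathbf{B}}_2{\mathbf{B}}_2^{\top}$ by appealing to the already-established ``simple case'' (two equal-dimensional, non-intersecting subspaces), you instead compute its eigenpairs directly via the $2\times 2$ action on each $\Span(\mathbf{u}_i,\mathbf{v}_i)$, obtaining $1\pm\cos\theta_i$ explicitly; this makes your argument self-contained at the cost of a little extra calculation.
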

 
\begin{proof}
We again consider the non-intersection subspaces $\mathcal{A}_1$ and $\mathcal{A}_2$ and calculate the sum of their projection matrices as:
    \begin{equation}
        \mathbf{J} = {\mathbf{B}_1} {\mathbf{B}}_1^\top + {\mathbf{B}_2} {\mathbf{B}}_2^\top,
    \end{equation}
    \begin{equation}
        \mathbf{J}\mathbf{H}  = \mathbf{H}\mathbf{S}.
    \end{equation}
By the definitions of $\mathbf{B}_1$ and $\mathbf{B}_2$, we can derive the projection matrices of $\mathcal{S}_1$ and  $\mathcal{S}_2$, and then $\mathbf{G}$:
    \begin{align}
        {\mathbf{\Phi}^{*}} {\mathbf{\Phi}^{*}}^\top &= \mathbf{\Gamma} \mathbf{\Gamma}^\top + {\mathbf{B}_1} {\mathbf{B}}_1^{\top}, \\
         {\mathbf{\Psi}^{*}} {\mathbf{\Psi}^{*}}^{\top} &= \mathbf{\Gamma} \mathbf{\Gamma}^\top + {\mathbf{B}_2} {\mathbf{B}}_2^\top +  {\mathbf{Z}} {\mathbf{Z}}^\top,\\
        {\mathbf{G}} &= 2 \mathbf{\Gamma} \mathbf{\Gamma}^\top + \mathbf{J} +  {\mathbf{Z}} {\mathbf{Z}}^\top,
    \end{align}
where $\mathbf{\Gamma} \mathbf{\Gamma}^{\top}$, $\mathbf{Z}\mathbf{Z}^{\top}$ and $\mathbf{J}$ are orthogonal to each other. Thus, the SVD of  $\mathbf{G}$ is of the following form:
    \begin{equation}
        \mathbf{G} = 
        \begin{bmatrix}
            \mathbf{\Gamma} & \mathbf{Z} & \mathbf{H}
        \end{bmatrix}
        \begin{bmatrix}
            2 \mathbf{I} & &\\
            & \mathbf{I} &\\
             & & \mathbf{S}
        \end{bmatrix}
        \begin{bmatrix}
            \mathbf{\Gamma} & \mathbf{Z} & \mathbf{H}
        \end{bmatrix}^\top.
    \end{equation}
The intersection $\mathcal{I}$ is spanned by the leading eigenvectors with value $2$. The subspace $\mathcal{Z}$ is spanned by the eigenvectors with value $1$. 
The remainder are then the same eigenvalues of $\mathbf{J}$. Since $\mathbf{J}$ is the sum of the two non-intersected subspaces, they follow the simple case definition of principal/ difference subspaces, such that the principal component subspace $\mathcal{M}$ is spanned by the eigenvectors corresponding to the $d_1$ larger  eigenvalues than 1  and the difference subspace $\mathcal{D}$ is spanned by the $d_1 - r$ non-negative eigenvalues smaller than 1.    
\end{proof}

\if 0
\begin{lemma}[Principal subspace as center point \cite{Kobayashi_2023_BMVC}]
  \label{lemma:karcher_bmvc23}
  Let ${\mathcal{S}}_1, {\mathcal{S}}_2 \in \Gr(d,n)$ be two subspaces, and ${l}({\mathcal{S}}_1, {\mathcal{S}}_2)$ the geodesic between them, with basis matrices parameterized by a function $\mathbf{\hat{l}}: t \in [0,1] \rightarrow \mathbb{R}^{n \times d}$.
The principal component subspace ${\mathcal{M}}(\mathcal{S}_1, \mathcal{S}_2)$ is the center point between points 
${\mathcal{S}}_1$ and ${\mathcal{S}}_2$ along the geodesic ${l}({\mathcal{S}}_1, {\mathcal{S}}_2)$, i.e.:
  \begin{equation}
    {\mathcal{M}}(\mathcal{S}_1, \mathcal{S}_2) = \Span \quad \mathbf{\hat{l}} \left( \frac{1}{2} \right)
    \label{eq:commonsubspace}
  \end{equation}
\end{lemma}
\begin{proof}

The geodesic path between $\mathcal{S}_1$ and $\mathcal{S}_2$ on $\Gr(d,n)$, given orthogonal basis matrices  $\mathbf{\Phi} \in {\mathbb{R}}^{{n}\times{d}}$ and $\mathbf{\Psi} \in {\mathbb{R}}^{{n}\times{d}}$ is described by~\cite{Kobayashi_2023_BMVC}:
    \begin{equation}
  \mathbf{\hat{l}}(t) = \mathbf{\Phi} \mathbf{U}\,\mathbf{\Sigma}_{(t)} - \bar{\mathbf{\Phi}}\bar{\mathbf{U}}\,\mathbf{\Gamma}_{(t)},\ t\in[0,1],
  \label{eq:geodesic}
\end{equation}
where $\bar{\mathbf{\Phi}}\in\mathbb{R}^{n-d \times d}$ is an orthogonal complement matrix to $\mathbf{\Phi}$ and $\mathbf{U}\in\mathbb{R}^{d\times d}$ and $\bar{\mathbf{U}}\in\mathbb{R}^{n-d \times d}$ are orthonormal matrices given by the (generalized) SVD of
\begin{align}
  &\mathbf{\Phi}^\top\mathbf{\Psi} = \mathbf{U}\,\mathbf{\Sigma}\mathbf{V}^\top,&&
  \bar{\mathbf{\Phi}}^\top\mathbf{\Psi} = -\bar{\mathbf{U}}\,\mathbf{\Gamma}\mathbf{V}^\top,
  \label{eq:gsvd}\\
  &\mathbf{\Sigma} = \diag(\{\cos\theta_k\}_{k=1}^d),&&
  \mathbf{\Gamma} = \diag(\{\sin\theta_k\}_{k=1}^d),\label{eq:gsvdeig}\\
  &\mathbf{\Sigma}_{(t)} \!=\! \diag(\{\cos(t\theta_k)\}_{k=1}^d),&&
  \mathbf{\Gamma}_{(t)} \!=\! \diag(\{\sin(t\theta_k)\}_{k=1}^d),
\end{align}
which use an orthonormal matrix $\mathbf{V}\!\in\!\mathbb{R}^{d\times d}$ and the canonical angles $\{\theta_k\}_{k=1}^d$ between $\mathbf{\Phi}$ and $\mathbf{\Psi}$.
  
 In \eqref{eq:gsvd}, for clarity, we describe $\mathbf{\Sigma}=\cos\mathbf{\Theta}$ and $\mathbf{\Gamma}=\sin\mathbf{\Theta}$ with $\mathbf{\Theta}=\diag(\{\theta_k\}_{k=1}^d)$, and thereby obtain
  \begin{align}
    &\mathbf{\Phi} \mathbf{\Phi}^\top\mathbf{\Psi}\mathbf{V} + \mathbf{\Phi}\mathbf{U}
    =\mathbf{\Phi}\mathbf{U}(\cos\mathbf{\Theta} + \mathbf{I})
    =2\mathbf{\Phi}\mathbf{U}\cos^2\frac{\mathbf{\Theta}}{2},\\
    &\bar{\mathbf{\Phi}} \bar{\mathbf{\Phi}}^\top\mathbf{\Psi}\mathbf{V}
    =-\bar{\mathbf{\Phi}}\bar{\mathbf{U}}\sin\mathbf{\Theta}
    =-2\bar{\mathbf{\Phi}}\bar{\mathbf{U}}\sin\frac{\mathbf{\Theta}}{2}\cos\frac{\mathbf{\Theta}}{2}.
  \end{align}

  The center point on the geodesic path is thus written by 
  \begin{align}
   \mathbf{\hat{l}} \left( \frac{1}{2} \right) &= \mathbf{\Phi}\mathbf{U}\,\mathbf{\Sigma}_{(t=\frac{1}{2})} - \bar{\mathbf{\Phi}}\bar{\mathbf{U}}\,\mathbf{\Gamma}_{(t=\frac{1}{2})}\\
    &=
    \mathbf{\Phi}\mathbf{U}\cos\frac{\mathbf{\Theta}}{2} - \bar{\mathbf{\Phi}}\bar{\mathbf{U}}\sin\frac{\mathbf{\Theta}}{2}\\
    &=\frac{1}{2}\left\{\mathbf{\Phi} \mathbf{\Phi}^\top\mathbf{\Psi}\mathbf{V} + \mathbf{\Phi}\mathbf{U}
    +\bar{\mathbf{\Phi}} \bar{\mathbf{\Phi}}^\top\mathbf{\Psi}\mathbf{V}\right\}\cos^{-1}\frac{\mathbf{\Theta}}{2}\\
    &=\frac{1}{2}\left\{ \mathbf{\Phi}\mathbf{U} + (\mathbf{\Phi} \mathbf{\Phi}^\top + \bar{\mathbf{\Phi}} \bar{\mathbf{\Phi}}^\top )\mathbf{\Psi}\mathbf{V} \right\}\cos^{-1}\frac{\mathbf{\Theta}}{2}\\
    &=\frac{1}{2}\left(\mathbf{\Phi}\mathbf{U} + \mathbf{\Psi}\mathbf{V} \right)\cos^{-1}\frac{\mathbf{\Theta}}{2},
    \label{eq:geod_mid}
  \end{align}
  where we use
  \begin{align}
  &\mathbf{\Phi} \mathbf{\Phi}^\top + \bar{\mathbf{\Phi}} \bar{\mathbf{\Phi}}^\top  = \mathbf{I},\\
    &\mathcal{M}(\mathcal{S}_1, \mathcal{S}_2)= \mathsf{colnorm}\left\{ \mathbf{\Phi}\mathbf{U}+\mathbf{\Psi}\mathbf{V} \right\},\\
    &\left(\mathbf{\Phi}\mathbf{U} \!+\! \mathbf{\Psi}\mathbf{V} \right)^\top\!\left(\mathbf{\Phi}\mathbf{U} \!+\! \mathbf{\Psi}\mathbf{V} \right)
    =2\mathbf{I} + 2\cos\mathbf{\Theta} = 4\cos^2\frac{\mathbf{\Theta}}{2}.
  \end{align}
From Eq.~(\ref{eq:geod_mid}) we obtain our result
\begin{equation}
        \mathcal{M}(\mathcal{S}_1, \mathcal{S}_2) = \Span \quad  \mathbf{\hat{l}} \left( \frac{1}{2} \right)
\end{equation}
  
\end{proof}

\fi


\begin{lemma}
Sum subspace ${\mathcal{W}}({\mathcal{S}}_1, {\mathcal{S}}_2)$ is equal to ${\mathcal{W}}(\{{\mathcal{S}}^*\})$ where $\{{\mathcal{S}}^{*}\}$ is a set of all the subspaces on the geodesic ${l}$ 
between two points, ${\mathcal{S}}_1$ and ${\mathcal{S}}_2$ on $\Gr(d,n)$.
\label{lemma:sumspace}
\end{lemma}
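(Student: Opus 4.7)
The plan is to prove the two set inclusions $\mathcal{W}(\mathcal{S}_1,\mathcal{S}_2) \subseteq \mathcal{W}(\{\mathcal{S}^{*}\})$ and $\mathcal{W}(\{\mathcal{S}^{*}\}) \subseteq \mathcal{W}(\mathcal{S}_1,\mathcal{S}_2)$ separately. The first inclusion is essentially definitional: the endpoints $\mathcal{S}_1$ and $\mathcal{S}_2$ of the geodesic are themselves elements of the family $\{\mathcal{S}^{*}\}$ (they are the points at parameter values $t=0$ and $t=1$ of $\hat{l}$), so the sum over $\{\mathcal{S}^{*}\}$ contains at least the sum over just $\{\mathcal{S}_1,\mathcal{S}_2\}$. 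Nothing beyond the definition of sum subspace is needed here.

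The main work lies in the reverse inclusion, and I would argue it through the canonical vector pairs. Let $\{(\mathbf{u}_i,\mathbf{v}_i)\}_{i=1}^{d}$ be the canonical vector pairs of $\mathcal{S}_1$ and $\mathcal{S}_2$ realizing the canonical angles $\{\theta_i\}$. Starting from the standard Grassmannian geodesic formula $\hat{l}(t)=\mathbf{\Phi}\mathbf{U}\cos(t\mathbf{\Theta})-\bar{\mathbf{\Phi}}\bar{\mathbf{U}}\sin(t\mathbf{\Theta})$, I would use the identity $\bar{\mathbf{\Phi}}\bar{\mathbf{U}}\sin\mathbf{\Theta}=\mathbf{\Phi}\mathbf{U}\cos\mathbf{\Theta}-\mathbf{\Psi}\mathbf{V}$, which follows from the relation $\bar{\mathbf{\Phi}}^{\top}\mathbf{\Psi}=-\bar{\mathbf{U}}\mathbf{\Gamma}\mathbf{V}^{\top}$ together with $\bar{\mathbf{\Phi}}\bar{\mathbf{\Phi}}^{\top}=\mathbf{I}-\mathbf{\Phi}\mathbf{\Phi}^{\top}$. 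Substituting column by column, the $i$-th basis vector of $\mathcal{S}^{*}(t)$ can be written as
\begin{equation}
\mathbf{w}_i(t) = \cos(t\theta_i)\,\mathbf{u}_i + \frac{\sin(t\theta_i)}{\sin\theta_i}\bigl(\mathbf{v}_i-\cos\theta_i\,\mathbf{u}_i\bigr),
\end{equation}
which clearly lies in $\mathsf{span}(\mathbf{u}_i,\mathbf{v}_i)\subseteq \mathcal{S}_1+\mathcal{S}_2$. Since this holds for every column and every $t\in[0,1]$, every point on the geodesic is a subspace of $\mathcal{W}(\mathcal{S}_1,\mathcal{S}_2)$, and summing them gives $\mathcal{W}(\{\mathcal{S}^{*}\})\subseteq \mathcal{W}(\mathcal{S}_1,\mathcal{S}_2)$.

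Combining the two inclusions yields equality. The dimension claim $\dim\mathcal{W}(\mathcal{S}_1,\mathcal{S}_2)=2d$ is then immediate from the grassmann formula $\dim(\mathcal{S}_1+\mathcal{S}_2)=\dim\mathcal{S}_1+\dim\mathcal{S}_2-\dim(\mathcal{S}_1\cap\mathcal{S}_2)$, since this lemma is stated in the non-intersecting setting.

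The principal obstacle is in the reverse inclusion: one must unfold the geodesic parameterization into a form that is manifestly spanned by $\mathbf{u}_i$ and $\mathbf{v}_i$. After that rewriting, the conclusion reduces to the elementary observation that a two-dimensional plane is closed under rotations of its own vectors, so the argument becomes routine. A brief sanity check at the endpoints ($\mathbf{w}_i(0)=\mathbf{u}_i$ and $\mathbf{w}_i(1)=\mathbf{v}_i$) confirms that the parameterization is consistent with the geodesic being a curve between $\mathcal{S}_1$ and $\mathcal{S}_2$.
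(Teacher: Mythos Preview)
Your proof is correct and takes a genuinely different route from the paper's. The paper argues by comparing two symmetric matrices: it writes $\mathbf{G}=\int_0^1 \hat{l}(t)\hat{l}(t)^\top\,dt$ (whose range is $\mathcal{W}(\{\mathcal{S}^*\})$) and $\mathbf{H}=\mathbf{\Phi}\mathbf{\Phi}^\top+\mathbf{\Psi}\mathbf{\Psi}^\top$ (whose range is $\mathcal{W}(\mathcal{S}_1,\mathcal{S}_2)$), and then invokes a prior result that both admit eigendecompositions with the \emph{same} eigenvector block $[\mathbf{M},\mathbf{D}]$, only with different eigenvalue diagonals. Equal eigenvector sets (with nonzero eigenvalues) force equal ranges. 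Your approach instead unfolds the geodesic columnwise, using the identity $\bar{\mathbf{\Phi}}\bar{\mathbf{U}}\sin\mathbf{\Theta}=\mathbf{\Phi}\mathbf{U}\cos\mathbf{\Theta}-\mathbf{\Psi}\mathbf{V}$ to exhibit each column of $\hat{l}(t)$ explicitly as a linear combination of $\mathbf{u}_i$ and $\mathbf{v}_i$, and then closes with the two trivial inclusions. Your argument is more elementary and self-contained (it does not appeal to the external eigendecomposition lemma), and it sidesteps the slightly loose phrasing in the paper where $\mathbf{G}$ is called a ``projection matrix'' despite not being idempotent. The paper's route, on the other hand, makes the connection to the $\mathcal{M}/\mathcal{D}$ decomposition explicit, which is thematically useful later in the paper.
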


\begin{proof}
  The projection matrix $\mathbf{G}$ onto  ${\mathcal{W}}(\{{\mathcal{S}}^*\})$ can be written as the integral of all projection matrices of subspaces ${\mathcal{S}}^*$ along the geodesic:
  \begin{equation}
    \mathbf{G} = \int_{0}^{1} \mathbf{\hat{l}}(t) \mathbf{\hat{l}}(t)^\top  dt.
  \end{equation}
In addition, the projection matrix $\mathbf{H}$ onto ${\mathcal{W}}({\mathcal{S}}_1, {\mathcal{S}}_2)$ can be written simply as
\begin{equation}
  \mathbf{H} = \mathbf{\Phi}\mathbf{\Phi}^\top + \mathbf{\Psi}\mathbf{\Psi}^\top.
\end{equation}
  
As shown in~\cite{Kobayashi_2023_BMVC}, the projections matrices can be expressed as the following eigendecompositions:
\begin{align}
   \mathbf{G} = [\mathbf{M}, \mathbf{D}]
  \left[
    \arraycolsep=0.5mm
  \begin{array}{cc}
    \mathbf{\Lambda}_+\\
    &\mathbf{\Lambda}_{\!-}
  \end{array}\right]
  [\mathbf{M}, \mathbf{D}]^\top,
  \label{eq:eigdecompG}\\
  \mathbf{H} = [\mathbf{M}, \mathbf{D}]
  \left[
    \arraycolsep=0.5mm
  \begin{array}{cc}
   \mathbf{S}_+\\
    &\mathbf{S}_{\!-}
  \end{array}\right]
  [\mathbf{M}, \mathbf{D}]^\top,
  \label{eq:eigdecompH}
\end{align}
where $\mathbf{M},\mathbf{D}$ are orthogonal bases for the principal and difference and  $\mathbf{\Lambda}_{\pm}, \mathbf{S}_{\pm}$ eigenvalue matrices, written as
\begin{align}
  \mathbf{M} &\!=\! \mathsf{colnorm}(\mathbf{\Phi}\mathbf{U}+\mathbf{\Psi}\mathbf{V})\in\mathbb{R}^{n\times d},\\
  \mathbf{D} &\!=\! \mathsf{colnorm}(\mathbf{\Phi}\mathbf{U}-\mathbf{\Psi}\mathbf{V})\in\mathbb{R}^{n\times d},\\
  \mathbf{\Lambda}_{\pm}&\! =\!  \diag(\{1\pm\sinc\,\theta_k\}_{k=1}^d), \\
 \mathbf{S}_{\pm}&\! =\! \diag(\{1\pm\cos\theta_k\}_{k=1}^d). 
  \label{eq:eigval}
\end{align}

Since their eigenvectors are exactly the same, it must be that their span is the same, proving our claim.
\end{proof}

\begin{lemma}[The orthonormal basis for ${\mathcal{D}}$ and ${\mathcal{M}}$]
\label{lemma:orthonormal basis of D and M}
Give two subspaces $d_1$-dimensional $\mathcal{S}_1$ and $\mathcal{S}_2$ such that there is no intersection between them. 
Let ${\mathbf{\Phi}}$ and ${\mathbf{\Psi}}$ be the orthonormal basis of $\mathcal{S}_1$ and $\mathcal{S}_2$, respectively. 
The svd of ${\mathbf{\Phi}}^{\top}{\mathbf{\Psi}}$ is written as ${\mathbf{U}}{\mathbf{\Sigma}}{\mathbf{V}}^{\top}$.
The orthonormal basis ${\mathbf{D}}$ and ${\mathbf{M}}$ of difference subspace ${\mathcal{D}}$ and principal component subspace ${\mathcal{M}}$ between the two subspaces $\mathcal{S}_1$ and $\mathcal{S}_2$ are written in matrix form as follows:
\begin{align}
{\mathbf{D}} = ({\mathbf{{\Phi}U-{\Psi}V}})\{2({\mathbf{I}}-{\mathbf{\Sigma)}}\}^{-\frac{1}{2}},\\
{\mathbf{M}} = ({\mathbf{{\Phi}U+{\Psi}V}})\{2({\mathbf{I}}+{\mathbf{\Sigma)}}\}^{-\frac{1}{2}}.
 \label{eq:DS}
\end{align}

\begin{proof}
Each difference vector between a pair of canonical vectors can be written as follows:
\begin{align}
[{\mathbf{u}}_1-{\mathbf{v}}_1, {\mathbf{u}}_2-{\mathbf{v}}_2,\dots, {\mathbf{u}}_{d_1}-{\mathbf{v}}_{d_1}]
&=[\bar{\mathbf{d}}_1, \bar{\mathbf{d}}_2, \cdots, \bar{\mathbf{d}}_{d_1}],\\
&={\mathbf{{\Phi}U-{\Psi}V}} \in {\mathbb{R}}^{n{\times}{d_1}}.
\end{align}
Similarly, each principal component vector between a pair of canonical vectors can be written as follows:
\begin{align}
[{\mathbf{u}}_1+{\mathbf{v}}_1, {\mathbf{u}}_2+{\mathbf{v}}_2,\dots, {\mathbf{u}}_{d_1}+{\mathbf{v}}_{d_1}],
&=[\bar{\mathbf{m}}_1, \bar{\mathbf{m}}_2, \cdots, \bar{\mathbf{m}}_{d_1}],\\
&={\mathbf{{\Phi}U+{\Psi}V}} \in {\mathbb{R}}^{n{\times}{d_1}}.
\end{align}
Using the cosine theorem, we obtain ${\|{\mathbf{\bar{d}}}_i\|}_2^2$ and ${\|{\mathbf{\bar{m}}}_i\|}_2^2$ as follows:
\begin{align}
{\|{\mathbf{\bar{d}}}_i\|}_2^2 &= {\|{\mathbf{u}}_i\|}_2^2+{\|{\mathbf{v}}_i\|}_2^2 - 2 {\|{\mathbf{u}}_i\|}_2 {\|{\mathbf{v}}_i\|}_2 \cos{\theta_i},\\
&= 1 + 1 -2\cos{\theta_i},\\
&= 2(1 -\cos{\theta_i}).\\
{\|{\mathbf{\bar{m}}}_i\|}_2^2 &= {\|{\mathbf{u}}_i\|}_2^2+{\|{\mathbf{v}}_i\|}_2^2 - 2 {\|{\mathbf{u}}_i\|}_2 {\|{\mathbf{v}}_i\|}_2 \cos({\pi-\theta_i}),\\
&= 1 + 1 +2\cos{\theta_i},\\
&= 2(1 +\cos{\theta_i}).
\end{align}

Hence,
\begin{align}
\diag({\|{\mathbf{\bar{d}}}_1\|}_2, {\|{\mathbf{\bar{d}}}_2\|}_2, \dots, {\|{\mathbf{\bar{d}}}_{d_1}\|}_2)
={(2({\mathbf{I}}_{d_1{\times}d_1}-\cos(\mathbf\Theta))}^{\frac{1}{2}},
\end{align}
\begin{align}
\diag({\|{\mathbf{\bar{m}}}_1\|}_2, {\|{\mathbf{\bar{m}}}_2\|}_2, \dots, {\|{\mathbf{\bar{m}}}_{d_1}\|}_2)
={(2({\mathbf{I}}_{d_1{\times}d_1}+\cos(\mathbf\Theta))}^{\frac{1}{2}},
\end{align}
where $\mathbf\Theta=\diag(\theta_1, \theta_2, \dots, \theta_{d_1})$ and $\cos (\mathbf\Theta)=\mathbf{\Sigma}$.

According to the definitons of ${\mathcal{D}}$ and ${\mathcal{M}}$, we conclude that 
\begin{align}
{\mathbf{D}} &=[\bar{\mathbf{d}}_1, \cdots, \bar{\mathbf{d}}_{d_1}]
{\diag({\|{\mathbf{\bar{d}}}_1\|}_2, \dots, {\|{\mathbf{\bar{d}}}_{d_1}\|}_2)}^{-1}
=({\mathbf{{\Phi}U-{\Psi}V}})\{2({\mathbf{I}}-{\mathbf{\Sigma)}}\}^{-\frac{1}{2}}, \\
{\mathbf{M}} &=[\bar{\mathbf{m}}_1, \cdots, \bar{\mathbf{m}}_{d_1}]
{\diag({\|{\mathbf{\bar{m}}}_1\|}_2, \dots, {\|{\mathbf{\bar{m}}}_{d_1}\|}_2)}^{-1}
=({\mathbf{{\Phi}U+\Psi}V})\{2({\mathbf{I}}+{\mathbf{\Sigma)}}\}^{-\frac{1}{2}}.
\end{align}

\end{proof}
\end{lemma}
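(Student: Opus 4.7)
The plan is to show that the columns of $\mathbf{D}$ and $\mathbf{M}$ as defined by the closed-form expressions coincide with the normalized geometric difference vectors $\{\mathbf{d}_i\}$ and principal component vectors $\{\mathbf{m}_i\}$, and then to confirm that these form orthonormal sets spanning $\mathcal{D}$ and $\mathcal{M}$. The engine driving everything is the identification of $\mathbf{\Phi}\mathbf{U}$ and $\mathbf{\Psi}\mathbf{V}$ as matrices of canonical vectors.

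First, I would interpret the SVD $\mathbf{\Phi}^\top \mathbf{\Psi} = \mathbf{U}\mathbf{\Sigma}\mathbf{V}^\top$ in terms of canonical vectors: the columns of $\mathbf{\Phi}\mathbf{U}$ give vectors $\mathbf{u}_i \in \mathcal{S}_1$ and the columns of $\mathbf{\Psi}\mathbf{V}$ give $\mathbf{v}_i \in \mathcal{S}_2$, and these satisfy $\langle \mathbf{u}_i, \mathbf{v}_j \rangle = \delta_{ij}\cos\theta_i$ (the defining property of the $i$th canonical angle). Since $\mathbf{U}$ and $\mathbf{V}$ are orthogonal and $\mathbf{\Phi}, \mathbf{\Psi}$ have orthonormal columns, each of $\{\mathbf{u}_i\}$ and $\{\mathbf{v}_i\}$ is itself an orthonormal set. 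This is the standard SVD-principal-angle correspondence; taking it for granted, $\mathbf{\Phi}\mathbf{U} - \mathbf{\Psi}\mathbf{V} = [\bar{\mathbf{d}}_1, \ldots, \bar{\mathbf{d}}_{d_1}]$ and $\mathbf{\Phi}\mathbf{U} + \mathbf{\Psi}\mathbf{V} = [\bar{\mathbf{m}}_1, \ldots, \bar{\mathbf{m}}_{d_1}]$ in the notation of the geometric definition.

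Next, I would verify orthogonality of the unnormalized vectors. Expanding $\langle \bar{\mathbf{d}}_i, \bar{\mathbf{d}}_j\rangle = \langle \mathbf{u}_i, \mathbf{u}_j\rangle - \langle \mathbf{u}_i, \mathbf{v}_j\rangle - \langle \mathbf{v}_i, \mathbf{u}_j\rangle + \langle \mathbf{v}_i, \mathbf{v}_j\rangle$ collapses to $\delta_{ij}(2 - 2\cos\theta_i)$ using the three inner-product identities above, and a symmetric computation gives $\langle \bar{\mathbf{m}}_i, \bar{\mathbf{m}}_j\rangle = \delta_{ij}(2 + 2\cos\theta_i)$. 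Simultaneously this yields the norms via the cosine theorem: $\|\bar{\mathbf{d}}_i\|^2 = 2(1 - \cos\theta_i)$ and $\|\bar{\mathbf{m}}_i\|^2 = 2(1 + \cos\theta_i)$. Packaging these into diagonal matrices gives $\diag(\|\bar{\mathbf{d}}_i\|_2) = \{2(\mathbf{I} - \mathbf{\Sigma})\}^{1/2}$ and $\diag(\|\bar{\mathbf{m}}_i\|_2) = \{2(\mathbf{I} + \mathbf{\Sigma})\}^{1/2}$, because $\diag(\mathbf{\Sigma}) = \{\cos\theta_i\}$.

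Finally, right-multiplying the unnormalized matrices by the inverses of these diagonal norm matrices produces the claimed closed forms for $\mathbf{D}$ and $\mathbf{M}$. The nontrivial piece of the plan is really the opening identification of canonical vectors with $\mathbf{\Phi}\mathbf{U}$ and $\mathbf{\Psi}\mathbf{V}$; once that is in hand, the rest is an elementary trigonometric normalization. The mildest concern is well-posedness of $\{2(\mathbf{I} - \mathbf{\Sigma})\}^{-1/2}$, which requires each $\cos\theta_i < 1$, i.e., no canonical angle is zero — but that is exactly the no-intersection hypothesis of the lemma.
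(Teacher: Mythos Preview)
Your proposal is correct and follows essentially the same approach as the paper's proof: identify $\mathbf{\Phi U}$ and $\mathbf{\Psi V}$ as the canonical vectors, write $\mathbf{\Phi U}\mp\mathbf{\Psi V}$ as the unnormalized $\bar{\mathbf d}_i$/$\bar{\mathbf m}_i$, compute their norms via the cosine rule to get $2(1\mp\cos\theta_i)$, and normalize by the resulting diagonal matrix. Your version is slightly more explicit in checking the off-diagonal orthogonality $\langle\bar{\mathbf d}_i,\bar{\mathbf d}_j\rangle=0$ and in noting that the no-intersection hypothesis is what makes $\{2(\mathbf I-\mathbf\Sigma)\}^{-1/2}$ well-defined, but the argument is the same.
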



\vspace{3mm}
\begin{dfn}[Subspace projection]
\label{def:subspace projection}
The subspace projection $\omega({\mathcal{S}})$ of $d_1$-dimensional subspace $\mathcal{S}$ onto $d_2 (\geq d_1)$-dimensional subspace $\mathcal{W}$ is defined as follows:
\begin{align}
\label{min}
\omega(\mathcal{S}) =\underset{\mathcal{S}^\prime{} \in \mathcal{W}} {\operatorname{argmin}}~\rho(\mathcal{S}, \mathcal{S}^{\prime}),
\end{align}
where $\rho(\mathcal{S}, \mathcal{S}^{\prime})$ indicates the geodesic distance between $\mathcal{S}$ and $\mathcal{S}^{\prime}$.
\end{dfn}

\vspace{3mm}
\begin{lemma}
\label{lemma:subspace projection}
The subspace projection $\omega({\mathcal{S}})$ of $d_1$-dimensional subspace $\mathcal{S}$ onto $d_2(\geq{d_1})$-dimensional subspace $\mathcal{W}$ can be calculated by using the singular value decomposition (SVD) as flows:
\begin{align}
{\rm SVD}: {\mathbf{W}}^{\top} {\mathbf{S}} = {\mathbf{U}}{\mathbf{\Sigma}}{\mathbf{V}}^{\top},\\
\omega({\mathcal{S}}): {\mathcal{S}} \rightarrow {\mathcal{S}}^{\prime} \Rightarrow  {\mathbf{S}} \rightarrow {\mathbf{S}}^{\prime}={\mathbf{WU}},
\end{align}
where ${\mathbf{W}} \in \mathbb{R}^{n{\times}d_2}$ and ${\mathbf{S}} \in \mathbb{R}^{n{\times}d_1}$ are the orthonormal basis matrices of $\mathcal{W}$ and $\mathcal{S}$, respectively.
\end{lemma}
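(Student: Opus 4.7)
The plan is to establish the lemma by a direct optimization argument over all $d_1$-dimensional subspaces of $\mathcal{W}$, using the representation of the geodesic distance on $\mathrm{Gr}(d_1,n)$ in terms of principal angles.

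First, I would parametrize the $d_1$-dimensional subspaces contained in $\mathcal{W}$: any such $\mathcal{S}'$ admits an orthonormal basis of the form $\mathbf{S}' = \mathbf{W}\mathbf{Q}$ with $\mathbf{Q}\in\mathbb{R}^{d_2\times d_1}$ satisfying $\mathbf{Q}^{\top}\mathbf{Q}=\mathbf{I}_{d_1}$; orthonormality of $\mathbf{S}'$ follows from $\mathbf{W}^{\top}\mathbf{W}=\mathbf{I}_{d_2}$. Second, I would recall that the geodesic distance on $\mathrm{Gr}(d_1,n)$ is $\rho(\mathcal{S},\mathcal{S}')^{2}=\sum_{i=1}^{d_1}\theta_i^{2}$, where the principal angles $\theta_i\in[0,\pi/2]$ satisfy $\cos\theta_i=\sigma_i(\mathbf{S}^{\top}\mathbf{S}')$. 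Substituting the SVD $\mathbf{W}^{\top}\mathbf{S}=\mathbf{U}\mathbf{\Sigma}\mathbf{V}^{\top}$ yields $\mathbf{S}^{\top}\mathbf{S}'=\mathbf{V}\mathbf{\Sigma}\mathbf{U}^{\top}\mathbf{Q}$, so minimizing $\rho$ is equivalent to making these singular values as large as possible.

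Third, I would establish the key inequality: since $\mathbf{V}$ is orthogonal, $\sigma_i(\mathbf{V}\mathbf{\Sigma}\mathbf{U}^{\top}\mathbf{Q})=\sigma_i(\mathbf{\Sigma}\mathbf{U}^{\top}\mathbf{Q})$, and because $\mathbf{U}^{\top}\mathbf{Q}\in\mathbb{R}^{d_1\times d_1}$ has operator norm at most $1$ (its columns are the projections of unit vectors onto the orthonormal $\mathbf{U}$), the submultiplicativity $\sigma_i(\mathbf{A}\mathbf{B})\le\sigma_i(\mathbf{A})\|\mathbf{B}\|_{2}$ gives $\sigma_i(\mathbf{\Sigma}\mathbf{U}^{\top}\mathbf{Q})\le\sigma_i(\mathbf{\Sigma})=\sigma_i$ for every $i$. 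Consequently $\theta_i(\mathcal{S},\mathcal{S}')\ge\arccos\sigma_i$ uniformly in $i$, and therefore
\begin{equation}
\rho(\mathcal{S},\mathcal{S}')^{2}\;\ge\;\sum_{i=1}^{d_1}\arccos^{2}(\sigma_i).
\end{equation}

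Fourth, I would verify that the choice $\mathbf{Q}=\mathbf{U}$ achieves the lower bound simultaneously for every index: then $\mathbf{U}^{\top}\mathbf{Q}=\mathbf{I}_{d_1}$, hence the singular values of $\mathbf{S}^{\top}\mathbf{W}\mathbf{Q}$ equal $\sigma_1,\dots,\sigma_{d_1}$ exactly. The resulting basis $\mathbf{S}' = \mathbf{W}\mathbf{U}$ is orthonormal because $(\mathbf{W}\mathbf{U})^{\top}(\mathbf{W}\mathbf{U})=\mathbf{U}^{\top}\mathbf{U}=\mathbf{I}_{d_1}$, and its span lies in $\mathcal{W}$ by construction. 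Since equality in the per-index bound on principal angles forces equality in the sum, this $\mathbf{S}'$ is an argmin of \eqref{min}, which is exactly the claim.

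The main obstacle I expect is the per-index singular value inequality $\sigma_i(\mathbf{A}\mathbf{B})\le\sigma_i(\mathbf{A})$ for $\mathbf{B}$ with orthonormal columns; although standard, it must be invoked cleanly (via Courant--Fischer, or as a consequence of the interlacing inequality for compressions) to ensure that the bound holds for every $i$, not merely for the sum $\sum_i\theta_i^{2}$. A minor subtlety is that $\mathbf{Q}$ is only determined up to right-multiplication by a $d_1\times d_1$ orthogonal matrix, so the minimizer is unique as a subspace (i.e., $\omega(\mathcal{S})=\mathrm{range}(\mathbf{W}\mathbf{U})$), but not as a basis; this is harmless because the lemma's conclusion is a statement about the subspace $\mathcal{S}'$.
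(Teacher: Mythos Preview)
Your proposal is correct and takes a genuinely different route from the paper. The paper recasts the minimization of $\rho(\mathcal{S},\mathcal{S}')$ as a canonical-correlation maximization, applies Lagrange multipliers to obtain the stationarity condition $\mathbf{P}_1\mathbf{P}_2(\mathbf{WA})=(\mathbf{WA})\widehat{\mathbf{\Sigma}}$ (with $\mathbf{P}_1=\mathbf{WW}^\top$, $\mathbf{P}_2=\mathbf{SS}^\top$), and then verifies by direct substitution that $\mathbf{WU}$ satisfies the same eigenvector equation $\mathbf{P}_1\mathbf{P}_2(\mathbf{WU})=(\mathbf{WU})\mathbf{\Sigma}^{2}$, concluding $\mathbf{WA}=\mathbf{WU}$. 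Your argument instead bounds each principal angle individually via $\sigma_i(\mathbf{S}^\top\mathbf{W}\mathbf{Q})\le\sigma_i(\mathbf{S}^\top\mathbf{W})$ for any $\mathbf{Q}$ with orthonormal columns, and then exhibits $\mathbf{Q}=\mathbf{U}$ as attaining equality in every index simultaneously. The advantage of your approach is that it is a direct global-optimality proof: the Lagrangian route only locates critical points, and the step ``both $\mathbf{WA}$ and $\mathbf{WU}$ are eigenvectors of $\mathbf{P}_1\mathbf{P}_2$, hence equal'' tacitly needs an argument about eigenvalue multiplicities or about which eigenspace is selected. Your per-index bound also shows, essentially for free, that $\mathbf{WU}$ minimizes \emph{any} distance that is coordinatewise increasing in the principal angles, not only the geodesic one. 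The cost is invoking the inequality $\sigma_i(\mathbf{AB})\le\sigma_i(\mathbf{A})\|\mathbf{B}\|_2$, which as you note follows from Courant--Fischer; this is the only nontrivial ingredient and is standard.
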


\begin{proof}
Eq.(\ref{min}) can be written as a maximization problem of the following objective function $f$ with respect to the canonical correlation between $\mathcal{W}$ and $\mathcal{S}$:
\begin{align}
f(\mathbf{A}, \mathbf{B})=\underset{\mathbf{a}_i, \mathbf{b}_i} {\operatorname{maximize}} \sum_{i=1}^{d_1} {(\mathbf{W}\mathbf{a}_i)}^{\top} \mathbf{S}\mathbf{b}_i, ~~\text{subject to}~ ||\mathbf{a}_i||=||\mathbf{b}_i||=1,
\end{align}
where $\mathbf{a}_i \in \mathbb{R}^{d_2}$ and $\mathbf{b}_i \in \mathbb{R}^{d_1}$ are the weight coefficient vectors on each base. $\mathbf{W}\mathbf{a}_i$ is a vector in $\mathcal{W}$ and $\mathbf{S}\mathbf{b}_i$ is a vector in $\mathcal{S}$. 

We rewrite the above as follows:
\begin{align}
f(\mathbf{A}, \mathbf{B})= \underset{\mathbf{A}, \mathbf{B}}{\operatorname{maximize}} (\mathbf{WA}) ^{\top} (\mathbf{SB}), ~~\text{subject to}~ ||\mathbf{a}_i||=||\mathbf{b}_i||=1,
\end{align}
where $\mathbf{A}=[\mathbf{a}_1, \dots, \mathbf{a}_{d_1}]$ and $\mathbf{B}=[\mathbf{b}_1, \dots, \mathbf{b}_{d_1}]$. 

Using  the Lagrangian undetermined multiplier method, we obtain the following objective function:
\begin{align}
f(\mathbf{A}, \mathbf{B})&=\rm{tr}\left( \left( \mathbf{WA}\right) ^{\top}{\mathbf{SB}}\right) - \rm{tr}\left( \left( \left( \mathbf{WA}\right) ^{\top}\mathbf{WA}-\mathbf{I}\right) \mathbf{\Sigma} _{1}\right) -\rm{tr}\left( \left( \left( \mathbf{SB}\right) ^{\top} \mathbf{SB}-\mathbf{I}\right) \mathbf{\Sigma} _{2}\right), \\
 &=\rm{tr}\left( \mathbf{A}^{\top}\mathbf{W}^{\top}\mathbf{SB}\right) -\rm{tr}\left( \left( \mathbf{A}^{\top}\mathbf{W}^{\top}\mathbf{WA}-\mathbf{I}\right) \mathbf{\Sigma} _{1}\right) -\rm{tr}\left( \left(\mathbf{B}^{\top}\mathbf{S}^{\top}\mathbf{SB}-\mathbf{I}\right) \mathbf{\Sigma}_{2}\right),
\end{align}
where $\mathbf{\Sigma} _{1} \in \mathbb{R}^{d_1{\times}d_1}$ and $\mathbf{\Sigma} _{2} \in \mathbb{R}^{d_1{\times}d_1}$ are the diagonal matrices representing undetermined multipliers.

To find $\mathbf{A}$ and $\mathbf{B}$ for maximizing the objective function $f$, we differentiate $f(\mathbf{A}, \mathbf{B})$ with respect to $\mathbf{A}$ and $\mathbf{B}$ and set the derivatives to zeros as follows:
\begin{align}
\dfrac{\partial f}{\partial \mathbf{A}} &=\mathbf{W}^{\top} \mathbf{SB}-\mathbf{W}^{\top}\mathbf{WA} \mathbf{\Sigma} _{1}=\mathbf{0},\label{f_a}\\
\dfrac{\partial f}{\partial \mathbf{B}} &=\mathbf{S}^{\top}\mathbf{WA}-\mathbf{S}^{\top}\mathbf{SB} \mathbf{\Sigma} _{2}=\mathbf{0}\label{f_b}.
\end{align}

From Eqs.(\ref{f_a}) and (\ref{f_b}),
\begin{align}
\mathbf{W}^{\top}\mathbf{SB}=\mathbf{W}^{\top}\mathbf{WA\Sigma}_{1}=\mathbf{A\Sigma}_{1},\label{aa}\\
\mathbf{S}^{\top}\mathbf{WA}=\mathbf{S}^{\top}\mathbf{SB\Sigma}_{2}=\mathbf{B\Sigma}_{2}.\label{bb}
\end{align}

Further, from Eq.(\ref{bb}),
\begin{align}
\mathbf{B}=\mathbf{S}^{\top}\mathbf{WA\Sigma}_{2}^{-1}.
\end{align}

By substituting the above equation into Eq.(\ref{aa}), we obtain
\begin{align}
\mathbf{W}^{\top}\mathbf{SS}^{\top}\mathbf{WA\Sigma}_{2}^{-1}&= \mathbf{A\Sigma}_{1},\\
\mathbf{WW}^{\top}\mathbf{SS}^{\top}\mathbf{WA} &=\mathbf{WA\Sigma}_{1}\mathbf{\Sigma}_{2},\\
\mathbf{P}_{1}\mathbf{P}_{2}\left(\mathbf{WA}\right)&=\left(\mathbf{WA}\right) \widehat{\mathbf{\Sigma}},
\end{align}
where  $\mathbf{P}_{1}$ and $\mathbf{P}_{2}$ are the orthogonal projection matrices defined as $\mathbf{WW}^{\top}$ and $\mathbf{SS}^{\top}$, respectively, and $\widehat{\mathbf{\Sigma}}=\mathbf{\Sigma}_{1}\mathbf{\Sigma}_{2}$.
The last equation implies that $\mathbf{WA}$ represents a set of eigenvectos of $\mathbf{P}_{1}\mathbf{P}_{2}$.

Next, we show that $\mathbf{WU}$ also represents a set of the eigevectors of $\mathbf{P}_{1}\mathbf{P}_{2}$.


\begin{align}
\mathbf{P}_{1}\mathbf{P}_{2} &=\mathbf{WW}^{\top}\mathbf{SS}^{\top},\\
\end{align}
By substituting $\mathbf{W}^{\top}\mathbf{S} =\mathbf{U\Sigma V}^{\top}$ into the above equation, we obtain the following:
\begin{align}
\mathbf{P}_{1}\mathbf{P}_{2} &=\mathbf{WU\Sigma V}^{\top}\mathbf{S}^{\top}.\\
\end{align} 

By postmultiplying both sides of the above equation by $\mathbf{W}$, we obtain the following:
\begin{align}
\mathbf{P}_{1}\mathbf{P}_{2}\mathbf{W} &= \mathbf{WU\Sigma V}^{\top}\mathbf{S}^{\top}\mathbf{W},\\
 &=\mathbf{WU\Sigma V}^{\top}\mathbf{V\Sigma U}^{\top},\\
 &=\mathbf{WU\Sigma}^{2}\mathbf{U}^{\top},\\
\mathbf{P}_{1}\mathbf{P}_{2}\left(\mathbf{WU}\right) &=\left(\mathbf{WU}\right) \mathbf{\Sigma}^{2}.\label{p1p2}
 \end{align}

Equation (\ref{p1p2}) denotes that  $\mathbf{WU}$ also represents a set of eigevectors of $\mathbf{P}_{1}\mathbf{P}_{2}$. Thus, as $\mathbf{WU}=\mathbf{WA}$, we conclude that $\mathbf{WU}$ represents the orthonotmal basis of the closest subspace $\mathcal{S}^{\prime} \in \mathcal{W}$ to $\mathcal{S}$.
Consequently, the following relationship can be held:
\begin{align}
\omega(\mathcal{S}) =\underset{\mathcal{S}^\prime{} \in \mathcal{W}} {\operatorname{argmin}}~\rho(\mathcal{S}, \mathcal{S}^{\prime}).
\end{align}

It is worth to confirm that $\mathbf{SV}$ represents a set of eigevectors of $\mathbf{P}_{2}\mathbf{P}_{1}$.

From Eq.(\ref{aa}), we obtain
\begin{align}
\mathbf{A}=\mathbf{W}^{\top}\mathbf{SB\Sigma}_{1}^{-1}.
\end{align}
By substituting the above equation into Eq.(\ref{bb}), we obtain
\begin{align}
\mathbf{S}^{\top}\mathbf{WW}^{\top}\mathbf{SB\Sigma}_{1}^{-1}&=\mathbf{B\Sigma}_{2},\\
\mathbf{SS}^{\top}\mathbf{WW}^{\top}\mathbf{SB} &=\mathbf{SB\Sigma}_{2}\mathbf{\Sigma}_{1},\\
\mathbf{P}_{2}\mathbf{P}_{1}\left(\mathbf{SB}\right)&=\left(\mathbf{SB}\right) \widehat{\mathbf{\Sigma}}.
\end{align}

$\mathbf{P}_{2}\mathbf{P}_{1}$ can be transformed by using $\mathbf{W}^{\top}\mathbf{S} =\mathbf{U\Sigma V}^{\top}$ as follows:
 \begin{align}
\mathbf{P}_{2}\mathbf{P}_{1} &=\mathbf{SS}^{\top}\mathbf{WW}^{\top},\\
&=\mathbf{SV\Sigma U}^{\top}\mathbf{W}^{\top},\\
\mathbf{P}_{2}\mathbf{P}_{1}\mathbf{S} &=\mathbf{SV\Sigma U}^{\top}\mathbf{W}^{\top}\mathbf{S},\\
&=\mathbf{SV\Sigma U}^{\top}\mathbf{U\Sigma V}^{\top},\\
&=\mathbf{SV\Sigma}^{2}\mathbf{V}^{\top},\\
\mathbf{P}_{2}\mathbf{P}_{1}\left(\mathbf{SV}\right) &=\left(\mathbf{SV}\right)\mathbf{\Sigma}^{2}.
\end{align}
The last equation implies that $\mathbf{SV}$ represents a set of eigenvectors of $\mathbf{P}_{2}\mathbf{P}_{1}$.
\end{proof}

\end{document}